\documentclass[runningheads]{llncs}

\usepackage[final,year=2026,ID=10081]{eccv}
\usepackage{eccv}

\usepackage{eccvabbrv}

\usepackage{graphicx}
\usepackage{booktabs}
\usepackage[table]{xcolor}
\usepackage[accsupp]{axessibility}  

\newtheorem{assumption}{Assumption}
\usepackage{algorithm}
\usepackage{algpseudocode}
\usepackage{multirow}
\usepackage{booktabs}
\usepackage{graphicx}

\usepackage[pagebackref,breaklinks,colorlinks,citecolor=eccvblue]{hyperref}
\usepackage{hyperref}

\usepackage{orcidlink}

\newcommand{\yr}{\textcolor{black}}

\begin{document}

\title{Cycle-World: Mitigating Error Accumulation in Long-term Video World Models via Reverse-Prediction Cycle Consistency} 

\titlerunning{Cycle-World}

\author{Zihan Su\inst{1}\thanks{Equal contribution, $^\dagger$ Corresponding authors.}\orcidlink{0009-0008-4612-2368}  \and
Teng Hu\inst{1}$^\star$\orcidlink{0009-0008-1247-5931}  \and
Jiangning Zhang\inst{2}\orcidlink{0000-0001-8891-6766} \and
Ruiyan Wang\inst{1}\orcidlink{0009-0007-8202-213X} \and
Ran Yi\inst{1}$^\dagger$\orcidlink{0000-0003-1858-3358} \and
Lizhuang Ma\inst{1}$^\dagger$\orcidlink{0000-0003-1653-4341} \and
Dacheng Tao\inst{3}\orcidlink{0000-0001-7225-5449} }

\authorrunning{Z.~Su et al.}

\institute{School of Computer Science, Shanghai Jiao Tong University, Shanghai, China \and
Institute of Cyber-Systems and Control,
Zhejiang University, Hangzhou, China \and Nanyang Technological University, Singapore
\url{https://szhcz.github.io/projects/Cycle-World/}
}

\maketitle

\begin{abstract}
Autoregressive diffusion models have enabled high-quality video generation, yet their sequential nature inherently suffers from error accumulation. In long-horizon \yr{video} synthesis, minor prediction deviations compound over time, inevitably leading to unconstrained generative drift, structural collapse, and severe visual degradation. To address this, we propose Cycle-World, a novel framework designed for stable and temporally consistent long-video generation. Our approach tackles error drift by enforcing strict temporal reversibility across both the training and inference phases. Theoretically, we demonstrate that forward generative drift can be strictly bottlenecked by a cycle-consistency objective. During training, we integrate an efficient reverse-prediction model to implicitly embed causal constraints into the forward generator, compelling it to produce reversible sequences that tightly adhere to the natural video manifold. At inference time, we repurpose this frozen reverse model as a runtime corrector. Through gradient-based cycle guidance, it iteratively refines the generated latent representations, actively \yr{suppressing} accumulated errors before they are committed to the historical context. Extensive experiments on the VBench benchmark demonstrate that Cycle-World's dual-phase synergy significantly mitigates error drift, achieving state-of-the-art overall generation quality and long-horizon temporal consistency in 60-second synthesis.
  \keywords{Video generation \and Cycle consistency \and Error accumulation}
\end{abstract}    

\begin{figure}[h!]
    \centering
    \includegraphics[width=\textwidth]{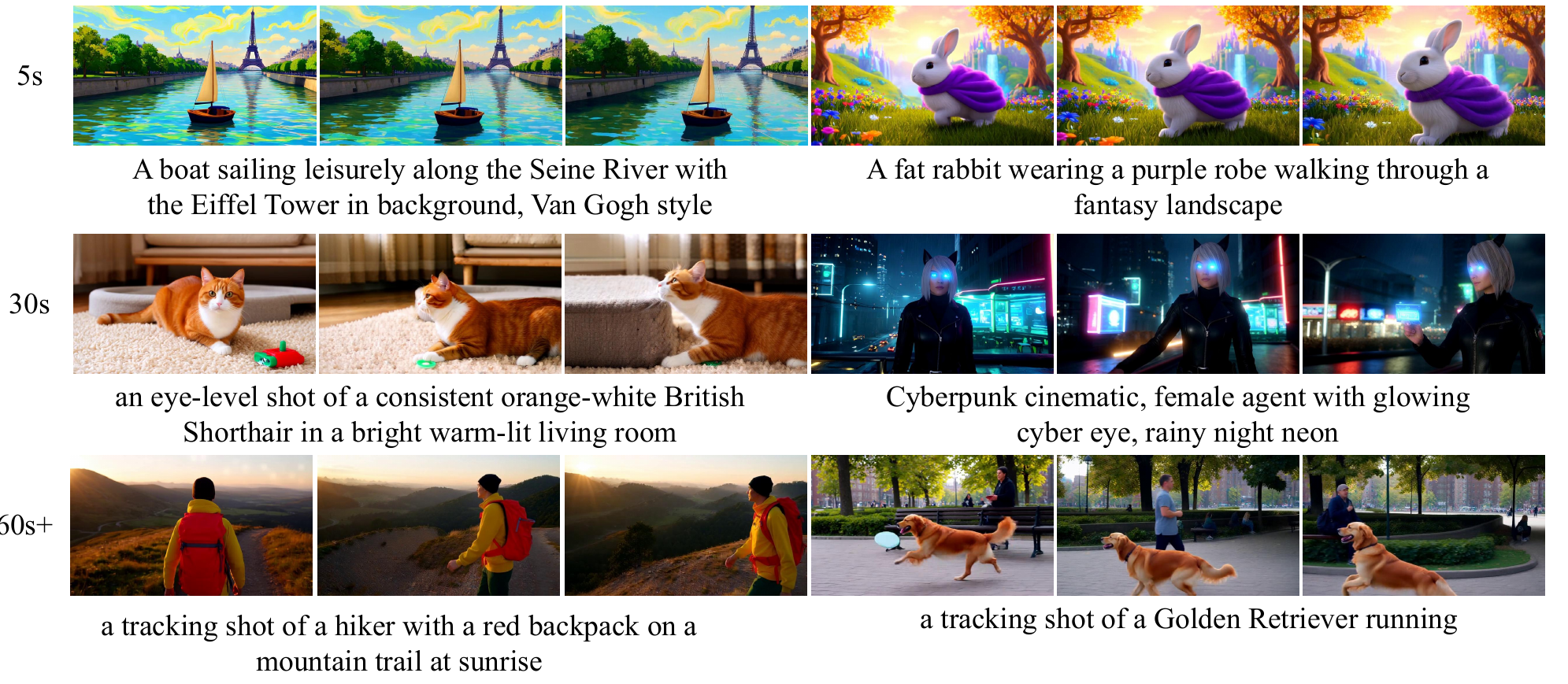}
\caption{\textbf{High-fidelity, long-horizon video generation with Cycle-World.} By effectively bottlenecking generative drift via temporal reversibility, our framework strictly suppresses structural hallucinations inherent in forward-only models. Cycle-World maintains state-of-the-art visual quality, strict physical conservation, and temporally consistent object states over extended generation horizons.}
    \label{fig:teaser}
\end{figure}

\section{Introduction}
\label{sec:intro}

The field of video generation has witnessed unprecedented advancements recently, driven by powerful models such as Sora\cite{sora-openai2024sora,sora2openai2025sora2}, Seedance\cite{seedance-gao2025seedance,seedance2}, and Kling\cite{kling}, alongside pioneering open-source efforts like Wan\cite{wan-2025wan}, HunyuanVideo\cite{hunyuanvideo-kong2024hunyuanvideo}.
While these models primarily rely on bidirectional or full-sequence architectures that achieve remarkable visual quality, their non-causal nature fundamentally limits their flexibility for open-ended, sequential, and interactive generation. As the community pivots towards the paradigm of World Models\cite{hyworld2025,2026matrix,mao2025yume,MetaWorld}, there is a critical consensus that real-time interactivity, continuous generation, and causal reasoning are indispensable. Consequently, the field is experiencing a paradigm shift towards causal, autoregressive generation models\cite{causvid-yin2025slow,selfforcing-huang2025self,zhu2026causalforcing,a-causal-cnn}.

However, this shift towards causal autoregressive models introduces severe bottlenecks in \yr{long video generation}. A primary challenge is the train-inference mismatch, often leading to rapid accumulation \yr{of errors}. Traditionally, models are trained using \textit{Teacher-Forcing}, which strictly relies on ground-truth past frames, causing significant exposure bias: during inference, \yr{the autoregressive model relies on its own} past predictions, \yr{where} minor deviations \yr{--- unseen during training --- propagate and amplify}. To bridge this gap and mitigate drift, the community has continuously evolved towards more robust training paradigms. This progression spans from the introduction of \textit{Diffusion Forcing}\cite{Diffusion-forcing-chen2024diffusion} to recent advanced train-inference alignment strategies, such as \textit{Self-Forcing}\cite{selfforcing-huang2025self} and \textit{LongLive}\cite{longlive-yang2025longlive}. 

\yr{Nevertheless}, we observe that even with these sophisticated mitigations, forward-only generation still suffers from a more fundamental and destructive issue: \textit{structural hallucinations}. While \yr{existing} progressive methods effectively suppress generic noise accumulation, they operate strictly in a unidirectional, forward-time manner. Crucially, they lack \textbf{\yr{temporal} cycle consistency}---the explicit temporal constraints required to ensure that a generated state logically allows its past to have happened\yr{, such that past states could be reversely predicted from future states}. Without this bidirectional verification, the models lack the physical constraints necessary to maintain continuous object states. Consequently, they are prone to severe video distortion and non-physical artifacts---such as characters clipping through solid objects, entities spontaneously appearing, or objects vanishing without a trace. Unlike generic noise, these structural hallucinations represent irreversible violations of real-world physics that abruptly destroy the integrity of the generated sequence.

To fundamentally address these irreversible structural hallucinations, we propose \textbf{Cycle-World}, a unified framework that enforces physical consistency via temporal reversibility. Our core motivation is grounded in a \yr{fundamental premise}: \textit{if a generated causal sequence obeys real-world dynamics, it must be temporally reversible}. Based on this insight, we first establish the \textbf{Cycle-Bounded Drift (CBD)} theorem, \yr{formally} proving that the unconstrained error accumulation in forward autoregressive synthesis can be strictly bottlenecked by minimizing the reverse reconstruction error. 
Guided by this theoretical guarantee, we translate the mathematical bound into a practical \textbf{Cycle-Consistent Learning (CCL)} paradigm. By introducing a reverse-prediction branch, we explicitly constrain the forward generator to produce inherently reversible latents, enabling it to foresee and \yr{suppress} non-physical artifacts during training. Furthermore, to extend the applicability of our theory to pre-trained models and resource-constrained scenarios, we propose \textbf{Cycle-Guided Inference (CGI)}. This \yr{inference-time} strategy repurposes the reverse model as a runtime critic to iteratively refine latents, offering a plug-and-play solution that significantly boosts stability without the need for expensive architectural modifications.

Extensive experiments conducted on VBench validate the effectiveness of our framework. Cycle-World significantly mitigates error drift, achieving state-of-the-art visual fidelity, semantic consistency, and overall temporal stability in long-video generation, as shown in Fig.~\ref{fig:teaser}.

In summary, our main contributions are threefold:
\begin{itemize}
    \item We establish the \textbf{Cycle-Bounded Drift (CBD)} theorem, a theoretical framework demonstrating that unconstrained generative drift and structural hallucinations in forward autoregressive synthesis can be strictly bottlenecked by enforcing temporal reversibility.
    \item We propose a \textbf{Cycle-Consistent Learning (CCL)} paradigm. By introducing a novel reverse-prediction cycle-consistency loss ($\mathcal{L}_{cycle}$), we explicitly constrain the forward causal generator to \yr{internalize} physical conservation and maintain long-term structural integrity.
    \item We introduce \textbf{Cycle-Guided Inference (CGI)}, a zero-shot runtime optimization strategy. By repurposing the frozen reverse model as a runtime critic, CGI utilizes iterative gradient-based latent refinement to actively \yr{rectify} non-physical artifacts, significantly enhancing long-\yr{term generation} quality without architectural modifications to the forward model.
\end{itemize}

\section{Related Work}
\label{sec:related_work}
\subsection{Video Generation} 
Diffusion models\cite{ddpm-ho2020denoising,ddim-song2020denoising,iddpm-nichol2021improved}, particularly Diffusion Transformers (DiT)\cite{dit-peebles2023scalable}, have established the prevailing paradigm for video generation\cite{wan-2025wan,hunyuanvideo-kong2024hunyuanvideo,cogvideox-yang2024cogvideox,Open-sora-zheng2024open,hu2026evolutionvideogenerativefoundations}, achieving remarkable visual fidelity. This rapid progress spans various specialized capabilities, including multimodal customized generation\cite{hu2025hunyuancustommultimodaldrivenarchitecturecustomized,PolyVivid,OmniVCus}, and native high-resolution synthesis\cite{wang2025lingen,hu2025ultragenhighresolutionvideogeneration}, and spatiotemporally consistent video processing\cite{Temporally-consistent-video-colorization,Full-Frame-Video-Stabilization}. Furthermore, recent advancements have expanded into joint audio-visual generation, leveraging cross-modal interactions and cross-task synergy to achieve synchronized multi-sensory synthesis\cite{zhang2026uniavgen,Harmony,low2025Ovi,wang2025universe,liu2026javisdit++}. However, despite these visual and multimodal achievements, their reliance on non-causal, bidirectional attention for concurrent frame denoising restricts their flexibility for open-ended generation. Conversely, pure autoregressive (AR) models\cite{videopoet-kondratyuk2023videopoet,cogvideo-hong2022cogvideo} offer sequential flexibility via discrete next-token prediction but suffer from irreversible information loss during latent compression and suboptimal generation diversity. 

To bridge this gap, recent works explore hybrid Autoregressive Diffusion architectures\cite{magi-1-dobrosotskaya2000magi,Pyramid-Flow-jin2024pyramidal,nova-deng2024autoregressive,causvid-yin2025slow,selfforcing-huang2025self}, which temporally decompose generation by conditioning future frames on past context. Despite their promise, these models suffer from severe exposure bias. During iterative inference, conditioning on imperfect prior predictions causes minor deviations to amplify continuously. This catastrophic error accumulation constitutes the primary bottleneck our work addresses.

\subsection{Long Video Generation}
Early long-video approaches relied on generating overlapping clips\cite{Seine-chen2023seine,freenoise-qiu2023freenoise,Gen-l-video-wang2023gen} or performing temporal interpolation between sparse keyframes\cite{Nuwa-xl-yin2023nuwa,Align-your-latents-blattmann2023align}. While extending the generation window, these heuristic designs fail to achieve true, infinitely long streaming synthesis. Consequently, the focus has shifted toward native causal modeling. However, traditional training paradigms for these models often employ Teacher Forcing\cite{TimeGrad-rasul2021autoregressive,teacher-forcing-williams1989learning}, which inevitably introduces a severe distribution discrepancy between the training and inference stages.
\begin{figure}[t]
    \centering
    \includegraphics[width=\textwidth]{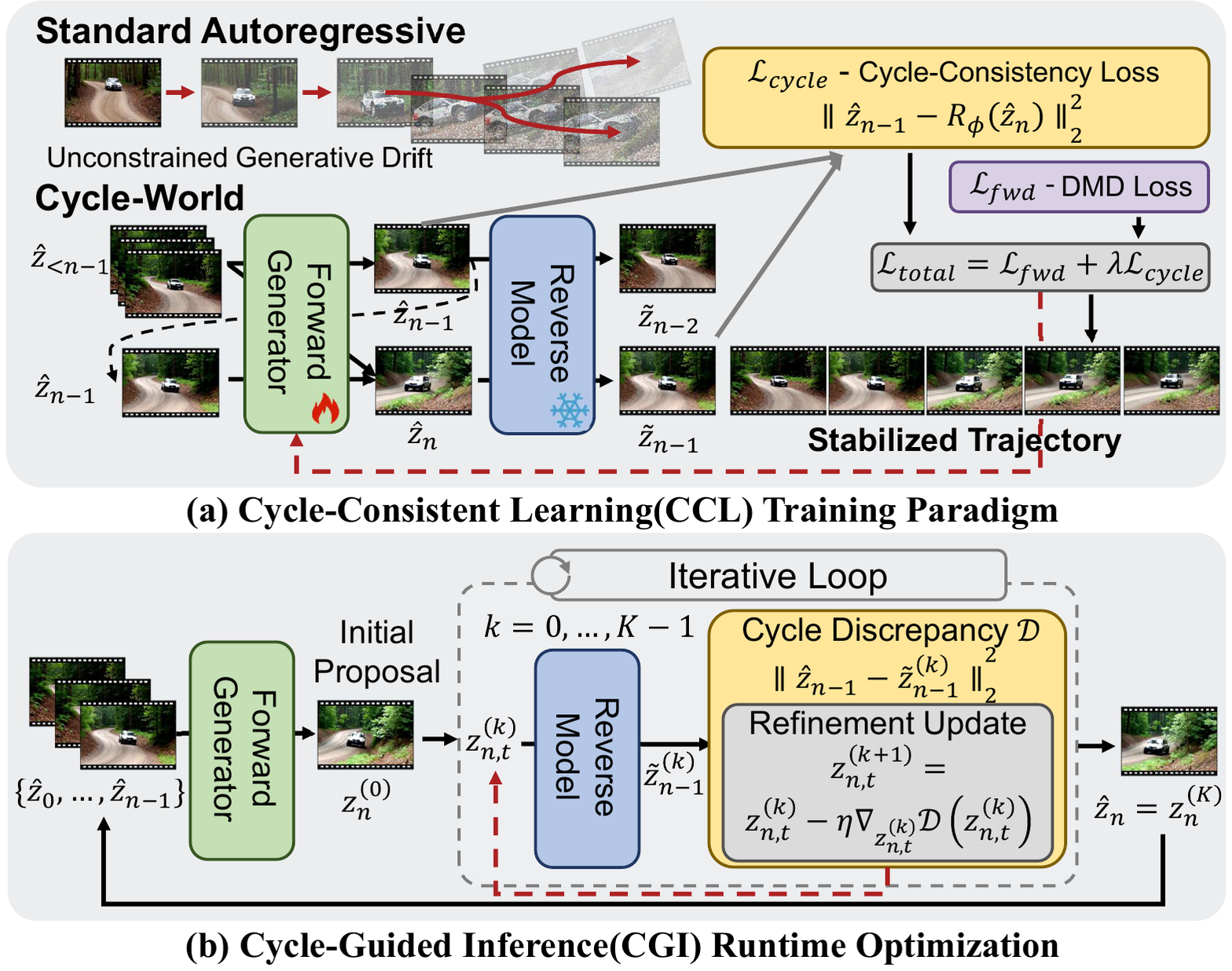}
\caption{\textbf{Overview of Cycle-World.} We mitigate generative drift in autoregressive video synthesis by enforcing temporal reversibility. \textbf{(a) Training (CCL):} The forward generator $G_\theta$ and reverse model $R_\phi$ are jointly optimized. A latent cycle-consistency loss ($\mathcal{L}_{cycle}$) explicitly penalizes physically irreversible trajectories. \textbf{(b) Inference (CGI):} The frozen $R_\phi$ acts as a runtime corrector. By evaluating cycle discrepancy ($\mathcal{D}$), it performs iterative gradient-based latent refinement to actively prune accumulated errors before they enter the historical context.}
    \label{fig:framework}
\end{figure}
To alleviate this issue, Diffusion Forcing\cite{Diffusion-forcing-chen2024diffusion} proposes joint denoising optimization for tokens with independent noise levels, which SkyReels-V2\cite{Skyreels-v2-chen2025skyreels} further integrates with Multi-modal Large Language Models to facilitate infinite-length, cinematic video synthesis. CausVid\cite{causvid-yin2025slow} extends distribution matching distillation\cite{dmd-yin2024one,dmd2-yin2024improved} to the video domain to mitigate error accumulation in AR generation. To resolve exposure bias and bridge the train-test gap, Self-Forcing\cite{selfforcing-huang2025self} innovatively simulates inference conditions directly during training by executing autoregressive rollouts with a Key-Value cache,  optimizing the model conditioned on its own historically generated outputs. Most recent cutting-edge works build upon this foundation. LongLive\cite{longlive-yang2025longlive}, for instance, adopts streaming long-sequence fine-tuning to strictly maintain end-to-end "train-long-test-long" consistency.

\section{Methodology}
\label{sec:method}
The overarching goal of \textbf{Cycle-World} is to mitigate the unconstrained generative drift and structural hallucinations inherent in \yr{long-term} autoregressive video synthesis. To achieve this, we introduce a novel framework grounded in the principle of temporal reversibility---the intuition that physically valid and structurally sound video dynamics must be accurately reversible. As illustrated in Figure \ref{fig:framework}, our approach \yr{addresses} this challenge through a cohesive pipeline encompassing theoretical grounding, cycle-consistent learning, and inference-time optimization. Specifically, we first establish the \textbf{Cycle-Bounded Drift (CBD)} theorem (Sec.~\ref{sec:theory}), a theoretical foundation demonstrating that the forward generative error can be strictly constrained by minimizing a reverse-prediction cycle-consistency error. Guided by this insight, we propose a \textbf{Cycle-Consistent Learning (CCL)} paradigm (Sec.~\ref{sec:training-method}). In this phase, we introduce a reverse-prediction model $R_\phi$ to enforce a cycle-consistency loss ($\mathcal{L}_{cycle}$) on the forward causal generator $G_\theta$, explicitly penalizing irreversible structural hallucinations. Finally, to combat out-of-distribution perturbations during open-ended generation, we introduce \textbf{Cycle-Guided Inference} (Sec.~\ref{sec:inference-method}). Under this strategy, we repurpose the frozen reverse model as a runtime corrector. By actively evaluating cycle discrepancy, it performs iterative gradient-based latent refinement to prune non-physical artifacts before they are committed to the historical context.

\subsection{Theoretical Foundation: Bounding Generative Drift via Temporal Reversibility}
\label{sec:theory}
Let $Z = \{z_1, z_2, \dots, z_N\}$ denote the ground-truth latent sequence of a natural video, where each $z_n$ is a latent chunk. In standard autoregressive synthesis, a forward causal generator $G_\theta$ sequentially predicts $\hat{z}_n$ conditioned on the previously generated context $\hat{z}_{<n}$. Because this sequential generation is inherently unconstrained, minor prediction deviations compound at each step $n$, leading to unbounded generative drift and  structural collapse in \yr{long video} synthesis.

We theoretically argue that this generative drift can be strictly constrained by enforcing temporal reversibility. Because natural video dynamics obey physical laws and spatiotemporal causality, they are inherently reversible. If a generated frame $\hat{z}_n$ severely deviates from the natural manifold, it loses the causal information necessary to reconstruct its history.

Let $e_n = \left\| \hat{z}_n - z_n \right\|$ be the accumulated generative drift at step $n$. To constrain this, we introduce a reverse-prediction model $R_\phi$ mapping a state at $n$ back to $n-1$. We establish our theoretical guarantees based on two mild assumptions:

\begin{assumption}[Reverse Predictability of Natural Dynamics]
\label{assump:reverse_predictability}
The reverse model $R_\phi$ is comprehensively trained under a self-forcing paradigm\cite{selfforcing-huang2025self}. Given this aligned training scheme, its approximation error for short-horizon reverse prediction is bounded by a small constant $\epsilon_R > 0$. That is, $\left\| R_\phi(z_n) - z_{n-1} \right\| \le \epsilon_R$.
\end{assumption}

\begin{assumption}[Reverse-Lipschitz Continuity]
\label{assump:reverse_lipschitz}
The learned reverse mapping $R_\phi$ preserves the distance properties within the relevant latent manifold, satisfying a reverse-Lipschitz condition. There exists a constant $C > 0$ such that for any two states $z_a$ and $z_b$, $\left\| z_a - z_b \right\| \le C \left\| R_\phi(z_a) - R_\phi(z_b) \right\|$.
\end{assumption}

Under these conditions, we demonstrate that the forward generative error $e_n$ is constrained by the cycle-consistency objective and the error from the previous steps. We formally define the single-step cycle-consistency distance as $d_{cycle}^{(n)} = \left\| \hat{z}_{n-1} - R_\phi(\hat{z}_n) \right\|$. 

Due to space constraints, all detailed proofs for the theoretical results presented in this section are deferred to Supplementary Material.
\begin{theorem}[Cycle-Bounded Drift]
\label{thm:cycle_error_bound}
Under Assumptions \ref{assump:reverse_predictability} and \ref{assump:reverse_lipschitz}, the forward generative drift $e_n$ satisfies:
\begin{equation}
    e_n \le C \left(d_{cycle}^{(n)} + e_{n-1} + \epsilon_R \right).
\end{equation}
\end{theorem}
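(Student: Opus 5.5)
The plan is to apply Assumption~\ref{assump:reverse_lipschitz} to the pair $(\hat{z}_n, z_n)$ and then bound the resulting reverse-side discrepancy with the triangle inequality.

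\textbf{Step 1 (reverse-Lipschitz).} Take $z_a = \hat{z}_n$ and $z_b = z_n$ in Assumption~\ref{assump:reverse_lipschitz}. This gives
\begin{equation*}
e_n = \left\| \hat{z}_n - z_n \right\| \le C \left\| R_\phi(\hat{z}_n) - R_\phi(z_n) \right\|.
\end{equation*}
This converts the forward drift at step $n$ into a discrepancy between reverse predictions, which are the quantities the two assumptions and $d_{cycle}^{(n)}$ control.

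\textbf{Step 2 (triangle inequality).} Insert the intermediate points $\hat{z}_{n-1}$ and $z_{n-1}$:
\begin{equation*}
\left\| R_\phi(\hat{z}_n) - R_\phi(z_n) \right\| \le \left\| R_\phi(\hat{z}_n) - \hat{z}_{n-1} \right\| + \left\| \hat{z}_{n-1} - z_{n-1} \right\| + \left\| z_{n-1} - R_\phi(z_n) \right\|.
\end{equation*}
The three terms are identified as follows:
\begin{itemize}
\item The first term is exactly $d_{cycle}^{(n)}$, by its definition.
\item The second term is $e_{n-1}$, by the definition of drift.
\item The third term is at most $\epsilon_R$, by Assumption~\ref{assump:reverse_predictability} applied to the ground-truth chunk $z_n$.
\end{itemize}
Multiplying by $C$ then yields $e_n \le C(d_{cycle}^{(n)} + e_{n-1} + \epsilon_R)$, as claimed.

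\textbf{Main obstacle.} The algebra itself is routine; the delicate point is the domain of the assumptions. Assumption~\ref{assump:reverse_lipschitz} must hold at the generated state $\hat{z}_n$, which may lie off the natural manifold. I will read the phrase ``for any two states'' literally, so that it covers $\hat{z}_n$. If only a manifold-restricted version were available, I would add the hypothesis that $\hat{z}_n$ remains within the region where the reverse-Lipschitz condition holds.

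A second point concerns Assumption~\ref{assump:reverse_predictability}: it is used only at ground-truth states $z_n$, never at $\hat{z}_n$. That is the setting in which it is stated, so no extension is needed there.
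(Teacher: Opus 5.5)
Your proof is correct and matches the paper's argument step for step: apply the reverse-Lipschitz condition to $(\hat{z}_n, z_n)$, then use a three-term triangle inequality through $\hat{z}_{n-1}$ and $z_{n-1}$, identifying the terms as $d_{cycle}^{(n)}$, $e_{n-1}$, and at most $\epsilon_R$. Your caveat is also fair: Assumption~\ref{assump:reverse_lipschitz} must hold at the possibly off-manifold state $\hat{z}_n$, which the paper's proof uses without comment.
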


To understand the compounding drift over a long horizon, we recursively unroll this step-wise recurrence.

\begin{corollary}[Long-Horizon Error Bound]
\label{cor:cumulative_drift_bound}
Assuming a worst-case upper limit on the single-step cycle-consistency distance, $\max_{i} d_{cycle}^{(i)} \le \delta_{cycle}$, the total generative drift at step $n$ (for $C \neq 1$) is explicitly governed by:

\begin{equation}
    \label{eq:closed_form_bound}
    e_n \le C^n e_0 + (\delta_{cycle} + \epsilon_R) C \frac{C^n - 1}{C - 1}.
\end{equation}
\end{corollary}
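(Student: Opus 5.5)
The corollary follows from unrolling the one-step recurrence of Theorem~\ref{thm:cycle_error_bound}. Write $a := \delta_{cycle} + \epsilon_R$. Since $d_{cycle}^{(n)} \le \delta_{cycle}$ for every $n$, the theorem yields the linear inequality
\begin{equation*}
    e_n \le C\, e_{n-1} + C a, \qquad n \ge 1.
\end{equation*}
Every coefficient here is nonnegative ($C>0$ and $a \ge 0$). The map $x \mapsto Cx + Ca$ is therefore monotone increasing, so an upper bound on $e_{n-1}$ passes directly to a bound on $e_n$. This monotonicity is what makes an induction on $n$ legitimate.

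I will prove by induction the claim
\begin{equation*}
    e_n \le C^n e_0 + a \sum_{k=1}^{n} C^k .
\end{equation*}
For the base case $n=0$ the sum is empty and the claim reads $e_0 \le e_0$.

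For the inductive step, assume the claim for $n-1$ and substitute it into the linear inequality:
\begin{equation*}
    e_n \le C\Bigl(C^{n-1} e_0 + a\sum_{k=1}^{n-1} C^k\Bigr) + Ca = C^n e_0 + a\sum_{k=2}^{n} C^k + aC .
\end{equation*}
The final term $aC$ is the missing $k=1$ summand, so the right-hand side equals $C^n e_0 + a\sum_{k=1}^{n} C^k$, which is the claim for $n$.

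Finally, for $C \neq 1$ the geometric sum is
\begin{equation*}
    \sum_{k=1}^{n} C^k = C\,\frac{C^n - 1}{C - 1}.
\end{equation*}
Substituting this gives exactly Eq.~\eqref{eq:closed_form_bound}. The excluded case $C=1$ would instead give $e_n \le e_0 + n a$.

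There is no real obstacle in this argument. The points needing care are:
\begin{itemize}
    \item \emph{Index bookkeeping.} The sum must run over $k=1,\dots,n$, not $k=0,\dots,n-1$. This is because the theorem multiplies the whole bracket, including the additive term $a$, by $C$.
    \item \emph{Nonnegativity.} The condition $C>0$ (from Assumption~\ref{assump:reverse_lipschitz}) is what lets the induction hypothesis be inserted inside the recurrence.
    \item \emph{Definition of $e_0$.} $e_0$ should be read as the initial drift of the conditioning chunk, so that the theorem applies already at step $n=1$.
\end{itemize}
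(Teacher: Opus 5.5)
Your proof is correct and follows essentially the same route as the paper, which unrolls Theorem~\ref{thm:cycle_error_bound} to $e_n \le C^n e_0 + \sum_{i=1}^{n} C^{n-i+1}\bigl(d_{cycle}^{(i)} + \epsilon_R\bigr)$ and then sums the geometric series; the only cosmetic difference is that the paper bounds each $d_{cycle}^{(i)}$ by $\delta_{cycle}$ after unrolling, whereas you do so before. Your explicit induction, together with the observation that $C>0$ is what allows the inductive hypothesis to be inserted into the recurrence, makes rigorous a step the paper leaves implicit.
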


Building on Corollary \ref{cor:cumulative_drift_bound}, \yr{next we show} Cycle-World framework \yr{is} theoretical\yr{ly} superior \yr{to} unconstrained baselines. Let $\delta_{unc} = \max_i \| \hat{z}^{unc}_{i-1} - R_\phi(\hat{z}^{unc}_i) \|$ represent the maximum local cycle error of a standard, unconstrained generator, and $\delta_{cycle}$ represent our constrained error, where $\delta_{cycle} \ll \delta_{unc}$.

\begin{proposition}[Theoretical Advantage over Unconstrained Baselines]
\label{prop:error_gap}
Given the identical initial drift condition $e_0$ and the same pre-trained reverse model $R_\phi$ (with properties defined in Assumptions \ref{assump:reverse_predictability} and \ref{assump:reverse_lipschitz}), let $E_n^{unc}$ and $E_n^{ours}$ denote the theoretical upper limits of the generative drift at step $n$ for the unconstrained baseline and our constrained method, respectively. The explicit drift reduction (the gap between these theoretical limits) achieved by our method is:
\begin{equation}
    \Delta E_n = E_n^{unc} - E_n^{ours} = (\delta_{unc} - \delta_{cycle}) C \frac{C^n - 1}{C - 1} >0.
\end{equation}
\end{proposition}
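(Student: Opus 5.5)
The plan is to read the ``theoretical upper limits'' $E_n^{unc}$ and $E_n^{ours}$ as the closed-form right-hand side of Corollary~\ref{cor:cumulative_drift_bound}, instantiated with the respective worst-case single-step cycle distances. Corollary~\ref{cor:cumulative_drift_bound} only uses Assumptions~\ref{assump:reverse_predictability} and~\ref{assump:reverse_lipschitz} (through Theorem~\ref{thm:cycle_error_bound}) together with a uniform bound on $d_{cycle}^{(i)}$. Since both generators share the same frozen $R_\phi$, they share the same constants $C$ and $\epsilon_R$. Taking $\delta_{unc}$ as the uniform bound for the baseline and $\delta_{cycle}$ for our method, I will set
\begin{equation*}
E_n^{unc} = C^n e_0 + (\delta_{unc}+\epsilon_R)\, C\,\frac{C^n-1}{C-1}, \qquad
E_n^{ours} = C^n e_0 + (\delta_{cycle}+\epsilon_R)\, C\,\frac{C^n-1}{C-1}.
\end{equation*}
I will briefly justify this by noting that the unrolling in Corollary~\ref{cor:cumulative_drift_bound} applies verbatim to any generator whose cycle distances are uniformly bounded by the chosen constant.

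The identity then follows by subtraction. The $C^n e_0$ terms cancel because $e_0$ is identical, and the $\epsilon_R\, C\frac{C^n-1}{C-1}$ terms cancel because $R_\phi$ is the same. What remains is exactly $(\delta_{unc}-\delta_{cycle})\, C\frac{C^n-1}{C-1}$.

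For strict positivity, I will use $\delta_{unc} > \delta_{cycle}$ (implied by $\delta_{cycle}\ll\delta_{unc}$) and $C>0$ from Assumption~\ref{assump:reverse_lipschitz}. It then suffices to show $\frac{C^n-1}{C-1}>0$ for $C\neq 1$ and $n\ge 1$. I will write it as the geometric sum $\sum_{k=0}^{n-1} C^k$, which is at least $1$ because each term is positive. Alternatively, I can observe that $C^n-1$ and $C-1$ have the same sign for $C>0$. The case $C=1$ is excluded, as in the corollary, though its limit $n$ would give the same conclusion.

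The main subtlety is interpretive rather than computational. The proposition compares upper bounds, not the drifts themselves, so I must state clearly that $\Delta E_n$ is a gap between the two bounds and says nothing about the actual drifts. I must also require $n\ge 1$, since at $n=0$ the gap is zero and strict positivity fails. Making these two points explicit is the only care needed; everything else is direct algebra.
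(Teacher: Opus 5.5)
Your proposal is correct and matches the paper's proof: instantiate the closed form of Corollary~\ref{cor:cumulative_drift_bound} with $\delta_{unc}$ and with $\delta_{cycle}$, using the shared $C$, $\epsilon_R$ and $e_0$, then subtract. Your positivity step is more careful than the paper's, which cites only $\delta_{unc}>\delta_{cycle}$ and $C\neq 1$, whereas you also need and use $C>0$ from Assumption~\ref{assump:reverse_lipschitz} and $n\ge 1$, via $\frac{C^n-1}{C-1}=\sum_{k=0}^{n-1}C^k$.
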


\textbf{Remark.} Proposition \ref{prop:error_gap} is the theoretical cornerstone of our method. The term $(\delta_{unc} - \delta_{cycle})$ represents the single-step advantage gained by explicitly enforcing temporal reversibility. Crucially, the multiplier $C \frac{C^n - 1}{C - 1}$ implies that this advantage is not merely additive, but scales with the sequence length $n$. This proves that while standard generation and our method might perform similarly for very short clips, the unconstrained baseline will inevitably suffer from a much faster error explosion over time. By minimizing the single-step cycle error upper bound $\delta_{cycle}$, Cycle-World effectively suppresses the base magnitude of the accumulated generative drift, theoretically guaranteeing significantly better structural preservation in long-video generation.

\subsection{Cycle-Consistent Learning: Enforcing Cycle Consistency via Reverse Prediction}

\label{sec:training-method}

\textbf{Constraining the Forward Model \yr{via Temporal Cycle Consistency}.}
Building upon the theoretical guarantees established in Section \ref{sec:theory}—specifically Proposition \ref{prop:error_gap}, which proves that bounding single-step cycle error mathematically curtails compounding generative drift—we translate this insight into a practical learning framework. we extend a forward autoregressive video generator $G_\theta$ predicting latent chunks $\hat{z}_n$ given history $\hat{z}_{<n}$. Since standard maximum likelihood training optimizes solely forward prediction, unconstrained reverse consistency causes unbounded error accumulation scaling with $C \frac{C^n - 1}{C - 1}$ over sequence length $n$ (Corollary \ref{cor:cumulative_drift_bound}). To minimize this bound and mitigate structural hallucinations, we introduce a reverse-prediction model $R_\phi$ enforcing temporal cycle consistency.

\noindent\textbf{The Pixel-Latent Mismatch Bottleneck.}
Implementing $R_\phi$ naively by training a separate autoregressive model on reversed videos incurs severe dual distillation overhead and faces a prohibitive structural barrier: pixel-latent mismatch. Because Video VAEs temporally compress continuous frame sequences into a \yr{compact} latent \yr{representation}, the latent \yr{features} of a forward sequence 
\yr{do not exhibit simple temporal symmetry with those}
of a reversed sequence. This mismatch precludes the direct computation of the cycle-consistency distance, $d_{cycle}^{(n)} = \|\hat{z}_{n-1} - R_\phi(\hat{z}_n)\|$ defined in Theorem \ref{thm:cycle_error_bound}. Aligning the forward-ordered chunk $\hat{z}_{n-1}$ and a reverse-predicted chunk residing in disjoint manifolds requires an exorbitant Decode-Flip-Encode loop of decoding, temporally flipping, and re-encoding, rendering joint training computationally infeasible.

\noindent\textbf{Our Solution: Intrinsic Latent Reversibility.}
To fundamentally circumvent this bottleneck and the inherent feature mismatch, we propose \textit{Intrinsic Latent Reversibility}, redefining the cycle consistency objective directly on the intrinsic latent manifold rather than mapping back to the extrinsic pixel domain. $R_\phi$ \yr{need} not predict decoded, temporally flipped frames. \yr{Instead, we formulate the reverse task as learning the inverse transition dynamics within the latent space.} We set the \yr{optimization} target of $R_\phi(\hat{z}_n)$ to the forward-ordered history $\hat{z}_{n-1}$, learning the inverse probability $P(z_{n-1} | \hat{z}_n)$ directly. This approach \yr{eliminates the need for ``Decode-Flip-Encode" loop. Consequently,} the cycle-consistency distance ${d}_{cycle}^{(n)}$ \yr{reduces to a direct} vector subtraction in the shared latent space. \yr{This allows us to enforce strict structural constraints with negligible computational overhead, making joint training feasible.}

\noindent\textbf{Forward Generation via Self-Forcing.} 
To bridge the train-inference discrepancy, we train $G_\theta$ via self-forcing to predict the current chunk $\hat{z}_n$ given its generated history $\hat{z}_{<n}$ instead of ground truth. Since Distribution Matching Distillation (DMD) lacks paired regression constraints, optimizing this process updates the forward objective $\mathcal{L}_{fwd}$ using its score-difference gradient:
\begin{equation}
    \nabla_\theta \mathcal{L}_{fwd} = -\mathbb{E}_{t, \epsilon} \left[ \left( s_{real}(\hat{z}_{n,t}, t) - s_{fake}(\hat{z}_{n,t}, t) \right) \frac{\partial G_\theta(\hat{z}_{<n})}{\partial \theta} \right],
\end{equation}
where $\hat{z}_{n,t}$ is the noisy latent at diffusion timestep $t$. While this gradient update ensures high-fidelity chunk generation, it lacks explicit penalties for the structural drift that accumulates over long sequences.

\noindent\textbf{Latent Cycle-Consistency Objective.} 
To bound this generative drift, we implement the proposed Intrinsic Latent Reversibility via a reverse-prediction model $R_\phi$. As established, $R_\phi$ operates directly on the latent manifold, tasked with reconstructing the \textit{preceding} \yr{latent} $z_{n-1}$ given the current generation $\hat{z}_n$. 
Formally, let $\hat{z}_n$ be the latent chunk synthesized by the forward generator. The reverse model attempts to predict the immediate history $\tilde{z}_{n-1} = R_\phi(\hat{z}_n)$. The cycle-consistency objective is \yr{defined} as the expected squared Euclidean distance between the \textit{actual autoregressive \yr{conditioning context}} used by the forward model and the \textit{reconstructed \yr{history}} inferred by the reverse model:
\begin{equation}
    \mathcal{L}_{cycle} = \mathbb{E}_{\hat{z}_n \sim G_\theta} \left[ \left\| \hat{z}_{n-1} - R_\phi(\hat{z}_n) \right\|_2^2 \right].
\end{equation}
Minimizing this \yr{objective} explicitly enforces the invertibility assumption (Assumption \ref{assump:reverse_lipschitz}), ensuring that the generated $\hat{z}_n$ retains sufficient causal information to recover its origin\yr{, thereby preventing error accumulation.}

\noindent\textbf{Joint Optimization.} 
The final training objective seamlessly integrates the distribution-level supervision with our structural cycle constraint:
\begin{equation}
    \mathcal{L}_{total} = \mathcal{L}_{fwd} + \lambda \mathcal{L}_{cycle},
\end{equation}
where $\lambda$ is a hyperparameter scaling the penalty strength. 
Crucially, during backpropagation, the gradients from $\mathcal{L}_{cycle}$ flow through the 
reverse model $R_\phi$ and back into the forward generator $G_\theta$. This mechanism implicitly endows $G_\theta$ with \textit{foresight}—it penalizes the generation of physically implausible artifacts (like disappearing objects) that, while locally reasonably under $\mathcal{L}_{fwd}$, fail to accurately reconstruct their history, explicitly pruning divergent trajectories during the learning phase.

\subsection{Cycle-Guided Inference: Optimizing Generative Latents via Runtime Guidance}
\label{sec:inference-method}
While the proposed Cycle-Consistent Learning paradigm ensures the generator intrinsically 
\yr{preserves temporal reversibility,}
it requires full-scale parameter retraining. In the era of large-scale foundation models, such retraining is often computationally prohibitive or practically \yr{infeasible} due to closed-source weights. To address this limitation and extend the benefits of temporal reversibility to broader scenarios, we introduce \textbf{Cycle-Guided Inference (CGI)}, a zero-shot, training-free latent optimization strategy. Crucially, this strategy is model-agnostic: as long as the target model and the reverse corrector \yr{operate within the same latent manifold (}sharing the same Video VAE), CGI can be seamlessly \yr{plugged into} \textit{any} off-the-shelf autoregressive video generator, \yr{enabling} structural hallucinations \yr{mitigation} without updating a single model parameter.

\noindent\textbf{Cycle Guidance in Latent Space.} During the autoregressive inference phase, the weights of both the forward generator $G_\theta$ and the reverse model $R_\phi$ are strictly frozen. While earlier sections abstract the generation of the $n$-th block as a single output $z_n$, actual synthesis in diffusion models involves an iterative denoising process over timesteps $\{t_T, \dots, t_1\}$. Unlike conventional decoding that passively accepts forward predictions, we actively rectify accumulated errors at intermediate diffusion timesteps committing them to the historical context buffer (KV cache). Let $z_{n,t}$ denote the latent state at diffusion timestep $t$. First, the forward generator $G_\theta$ predicts the corresponding clean latent $\hat{z}_{n|t} = G_\theta(z_{n,t}, t, \mathcal{H})$ based on the historical context $\mathcal{H}$. To evaluate the physical plausibility of this prediction, we compute the cycle discrepancy $\mathcal{D}$. Specifically, the reverse evaluation is formulated as a two-stage autoregressive process. The predicted clean latent is temporally flipped, denoted by the operator $\mathcal{F}(\cdot)$, and processed by the reverse model at a fixed context noise level $t_{\text{ctx}}$ to construct a reverse contextual cache $\mathcal{H}_{\text{rev}}$. Subsequently, the reverse model utilizes this newly constructed cache to predict the clean predecessor state from standard Gaussian noise $\epsilon$, conditioned on the initial diffusion timestep $t_T$. The discrepancy is defined as the Euclidean distance between the causal condition $\hat{z}_{n-1}$ (the confirmed output of the previous block) and the time-flipped reverse reconstruction:

\begin{equation}
\begin{gathered}
    R_\phi(\mathcal{F}(\hat{z}_{n|t}^{(k)}), t_{\text{ctx}},\mathcal{H}_{\text{rev}}^{(k)}), \quad
    \tilde{z}_{n-1}^{(k)} = \mathcal{F}\left( R_\phi(\epsilon, t_T, \mathcal{H}_{\text{rev}}^{(k)}) \right),\\
    \mathcal{D}(z_{n,t}^{(k)}) = \left\| \hat{z}_{n-1} - \tilde{z}_{n-1}^{(k)} \right\|_2^2,
\end{gathered}
\end{equation}
where $\hat{z}_{n|t}^{(k)} = G_\theta(z_{n,t}^{(k)}, t, \mathcal{H})$, $\epsilon \sim \mathcal{N}(0, \mathbf{I})$, and $k$ is optimization iteration index.

\noindent\textbf{Gradient-Based Latent Refinement.} Since forward prediction and reverse reconstruction are fully differentiable processes, we can iteratively refine the latent state via cycle guidance. To maximize efficiency and structural impact, this optimization is strategically applied only during a specific window of early denoising timesteps (from $T_{\text{start}}$ to $T_{\text{end}}$). We compute the gradient of the cycle discrepancy with respect to the current state $z_{n,t}^{(k)}$ and perform gradient descent:
\begin{equation}
    z_{n,t}^{(k+1)} = z_{n,t}^{(k)} - \eta \nabla_{z_{n,t}^{(k)}} \mathcal{D}(z_{n,t}^{(k)}),
\end{equation}
where $\eta$ is the optimization step size. After $K$ iterations, the refined state $z_{n,t}^{(K)}$ is detached from the computation graph. This optimized state is then passed to the standard diffusion transition function $\Psi$ to obtain the latent state for the subsequent timestep. Upon reaching the final denoising step $t_1$, the resulting clean latent $\hat{z}_n$ is appended to the historical context buffer to guide the generation of the next block. This active rectification mechanism acts as a structural bottleneck, preventing inherent drift from manifesting as visual degradation. The complete inference procedure is summarized in Supplementary Material.

\section{Experiments}
\label{sec:experiment}
\subsection{Experiment Settings}
\noindent\textbf{Baselines.} We evaluate the proposed method against several video generation baselines, categorized by their architectural paradigms. For bidirectional diffusion and transformer models, we compare with LTX-Video~\cite{ltxvideo-HaCohen2024LTX} and Wan2.1~\cite{wan-2025wan}. Within the autoregressive family, we evaluate general-purpose models of varying scales, including NOVA~\cite{nova-deng2024autoregressive} (0.6B), SkyReels-V2~\cite{Skyreels-v2-chen2025skyreels} (1.3B), Pyramid Flow~\cite{Pyramid-Flow-jin2024pyramidal} (2B), and MAGI-1~\cite{magi-1-dobrosotskaya2000magi} (4.5B). To ensure a direct comparison with models sharing our foundational architecture and efficient training setup, we include the 1.3B parameter distilled few-step generators CausVid~\cite{causvid-yin2025slow} and chunk-wise Self Forcing~\cite{selfforcing-huang2025self}. Finally, to assess performance over extended sequences, we compare against models designed for long video generation: LongLive~\cite{longlive-yang2025longlive}, Self Forcing++~\cite{selfforcingplusplus-cui2025self}, Rolling Forcing~\cite{Rolling-forcingliu2025rolling}, Infinity-RoPE~\cite{Infinity-rope-yesiltepe2025infinity}, and Context Forcing~\cite{contextforcing-chen2026context}.

\noindent\textbf{Evaluation Metrics.} We report the performance on VBench~\cite{vbench-huang2023vbench,vbench2-zheng2025vbench2} following \cite{selfforcing-huang2025self,longlive-yang2025longlive,contextforcing-chen2026context}. To assess physical consistency, we use two metrics. The first is Physical Commonsense (PC) from the VideoPhy benchmark~\cite{videophy-bansal2024videophy,videophy2-bansal2025videophy}, which measures adherence to real-world physical laws. The second is the Physical Alignment and Consistency Evaluation (PACE), an LLM-as-a-Judge metric powered by Gemini. PACE scores videos from 0 to 100, penalizing physical hallucinations based on prompt compliance and four criteria: gravity and mass representation, collision dynamics, motion continuity (avoiding sudden teleportation or unnatural warping), and long-term temporal coherence.

\begin{table}[t]
\centering
\caption{Quantitative comparison on the VBench for 5s and 60s video generation.}
\resizebox{0.8\textwidth}{!}{
\begin{tabular}{lc ccc ccc}
\toprule
\multirow{2}{*}{\textbf{Model}} & \multirow{2}{*}{\textbf{\#Params}} & \multicolumn{3}{c}{\textbf{Evaluation scores on 5s $\uparrow$}} & \multicolumn{3}{c}{\textbf{Evaluation scores on 60s $\uparrow$}} \\
\cmidrule(lr){3-5} \cmidrule(lr){6-8}
& & \textbf{Total} & \textbf{Quality} & \textbf{Semantic} & \textbf{Total} & \textbf{Quality} & \textbf{Semantic} \\
\midrule
\multicolumn{8}{l}{\textit{Bidirectional models}} \\
\midrule
LTX-Video~\cite{ltxvideo-HaCohen2024LTX} & 1.9B & 80.00 & 82.30 & 70.79 & - & - & - \\
Wan2.1~\cite{wan-2025wan} & 1.3B & 84.26 & 85.30 & 80.09 & - & - & - \\
\midrule
\multicolumn{8}{l}{\textit{Autoregressive models}} \\
\midrule
SkyReels-V2~\cite{Skyreels-v2-chen2025skyreels} & 1.3B & 82.67 & 84.70 & 74.53 & 70.47 & 75.30 & 51.15 \\
MAGI-1~\cite{magi-1-dobrosotskaya2000magi} & 4.5B & 79.18 & 82.04 & 67.74 & 69.87 & 76.12 & 44.87 \\
CausVid~\cite{causvid-yin2025slow} & 1.3B & 81.20 & 84.05 & 69.80 & 71.04 & 76.80 & 48.01 \\
NOVA~\cite{nova-deng2024autoregressive} & 0.6B & 80.12 & 80.39 & 79.05 & 65.25 & 70.25 & 45.24 \\
Pyramid Flow~\cite{Pyramid-Flow-jin2024pyramidal} & 2B & 81.72 & 84.74 & 69.62 & - & - & - \\
Self Forcing, chunk-wise~\cite{selfforcing-huang2025self} & 1.3B & 84.31 & 85.07 & 81.28 & 71.86 & 77.20 & 50.51 \\
\midrule
\multicolumn{8}{l}{\textit{Long autoregressive models}} \\
\midrule
LongLive~\cite{longlive-yang2025longlive} & 1.3B & 83.72 & 85.42 & 76.95 & 82.62 & 84.53 & 74.97 \\
Self Forcing++~\cite{selfforcingplusplus-cui2025self} & 1.3B & 83.11 & 83.79 & 80.37 & - & - & - \\
Rolling Forcing~\cite{Rolling-forcingliu2025rolling} & 1.3B & 81.22 & 84.08 & 69.78 & 79.31 & 81.87 & 67.69 \\
Infinity-RoPE~\cite{Infinity-rope-yesiltepe2025infinity} & 1.3B & 81.79 & 83.27 & 75.87 & 79.99 & 80.81 & 74.30 \\
Context Forcing~\cite{contextforcing-chen2026context} & 1.3B & 83.44 & 84.98 & 77.29 & 82.45 & 83.55 & 76.10 \\
\rowcolor{CornflowerBlue!20}
\textbf{Ours} & 1.3B &\textbf{84.36} & \textbf{86.14} & 77.25 & \textbf{82.88} & 84.39 & \textbf{76.86}\\

\bottomrule
\end{tabular}
\label{tab:quantitative comparison}
}
\end{table}

\subsection{Comparison Results}
\textbf{Quantitative Evaluation on VBench.} We comprehensively evaluate our method against state-of-the-art bidirectional, standard autoregressive, and long-video specific autoregressive models on the VBench benchmark. As reported in Table \ref{tab:quantitative comparison}, we evaluate both short-horizon (5s) and long-horizon (60s) video generation to demonstrate our model's robustness against generative drift. 

For 5-second generation, our model achieves the highest Total score and Quality score, outperforming strong baselines such as LongLive and Self-Forcing. The superiority of our approach becomes overwhelmingly evident in the extremely long-horizon (60s) setting. Standard autoregressive models suffer from catastrophic error accumulation over extended contexts; for instance, the Total score of Self-Forcing plummets from 84.31 (5s) to 71.86 (60s), and SkyReels-V2 drops from 82.67 to 70.47. In stark contrast, our method effectively bounds this generative drift, maintaining a remarkable Total score and achieving the highest Semantic score at 60 seconds. This minimal performance degradation over time confirms that our cycle-consistency framework successfully preserves the structural integrity and visual fidelity of the generated sequence over infinite horizons.

\textbf{Physical Consistency Evaluation.} To further validate whether our model accurately captures the underlying physical rules of the visual world, we evaluate it on Physical Commonsense (PC) and  Physical Alignment and Consistency Evaluation (PACE) metrics. As shown in Table \ref{tab:physical_consistency}, our method significantly outperforms all baseline approaches, achieving the highest average score of 75.66. By enforcing temporal reversibility, our model inherently internalizes causal constraints, endowing it with a superior understanding of complex physical dynamics compared to standard autoregressive predictors.

\textbf{Qualitative Comparison.} Figure \ref{fig:qualitative_comparison} visualizes the 60-second generation quality of our model compared to the baselines. As the autoregressive steps accumulate, the baseline methods exhibit severe structural distortion, loss of the main subject, and prominent identity shifts. Our Cycle-World framework, however, acts as a strict temporal regularizer. It consistently preserves the subject's identity, maintains sharp background details, and ensures temporal continuity from the first frame to the very last, translating the quantitative resilience observed in Table \ref{tab:quantitative comparison} into striking visual stability.

\begin{figure}[t]
    \centering
    \includegraphics[width=\textwidth]{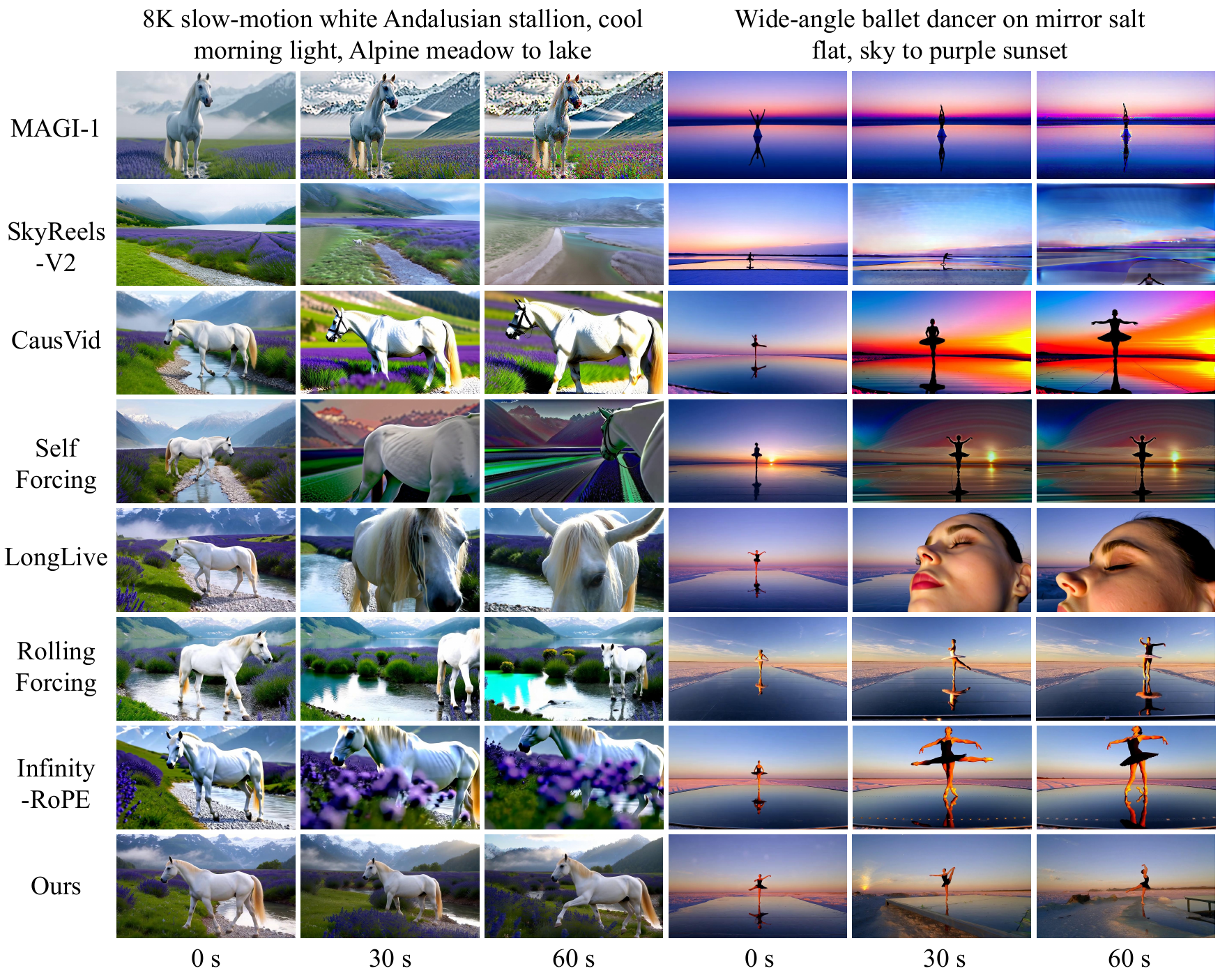}
    \caption{Qualitative comparison of long video generation. We compare our proposed method against several baseline models. The figure displays sampled frames at 0, 30, and 60 seconds for two distinct scenes. While the baseline methods struggle with subject loss, severe structural distortion, or identity shifts over the extended timeframe, our method successfully preserves temporal consistency, visual quality, and subject identity throughout the entire 60-second duration.}
    \label{fig:qualitative_comparison}
\end{figure}
\begin{table*}[htbp]
  \centering
  \begin{minipage}[t]{0.40\textwidth} 
    \centering
    \caption{Quantitative evaluation of physical consistency. Our proposed method outperforms all baseline approaches across both metrics, demonstrating a comprehensive understanding of complex physical dynamics.}
    \label{tab:physical_consistency}
    \begin{tabular}{lccc}
      \toprule
      \textbf{Method} & \textbf{PC} & \textbf{PACE} & \textbf{Avg.} \\ 
      \midrule
      Self-Forcing & $55.97$ & $78.57$ & $67.27$ \\
      LongLive & $61.94$ & $78.03$ & $69.99$ \\
      RollingForcing & $67.91$ & $75.05$ & $71.48$ \\
      Infinity-Rope & $60.45$ & $78.58$ & $69.52$ \\
      \midrule
      \rowcolor{CornflowerBlue!20}
      \textbf{Ours} & \textbf{$69.40$} & \textbf{$81.91$} &\textbf{$75.66$} \\
      \bottomrule
    \end{tabular}
  \end{minipage}
  \hfill
  \begin{minipage}[t]{0.57\textwidth} 
    \centering
    \caption{Ablation study of the proposed components on the VBench benchmark. We evaluate the individual and synergistic effects of the training-time cycle loss ($\mathcal{L}_{cycle}$) and the inference-time cycle guidance on 5s video generation. The best results are highlighted in \textbf{bold}.}
    \label{tab:ablation}
    \begin{tabular}{lccccc}
      \toprule
      \textbf{Method} & \textbf{Tot.} & \textbf{Qual.} & \textbf{Sem.} & \textbf{PC} & \textbf{PACE} \\ 
      \midrule
      Baseline  & 83.72 & 85.42 & 76.95 & 61.94 & 78.03 \\
      + CCL & 83.89 & 85.72 & 76.55 & 68.70 & 79.5 \\
      + CGI  & \textbf{84.51} & \textbf{86.37} & \textbf{77.05} & 65.67 & 78.80 \\
      \midrule
      \rowcolor{CornflowerBlue!20}
      \textbf{Cycle-World}      & 84.36 & 86.14 & 77.25 & \textbf{69.40} & \textbf{81.91} \\
      \bottomrule
    \end{tabular}
  \end{minipage}
\end{table*}
\subsection{Ablation Studies}
\begin{figure}[t]
    \centering
    \includegraphics[width=\textwidth]{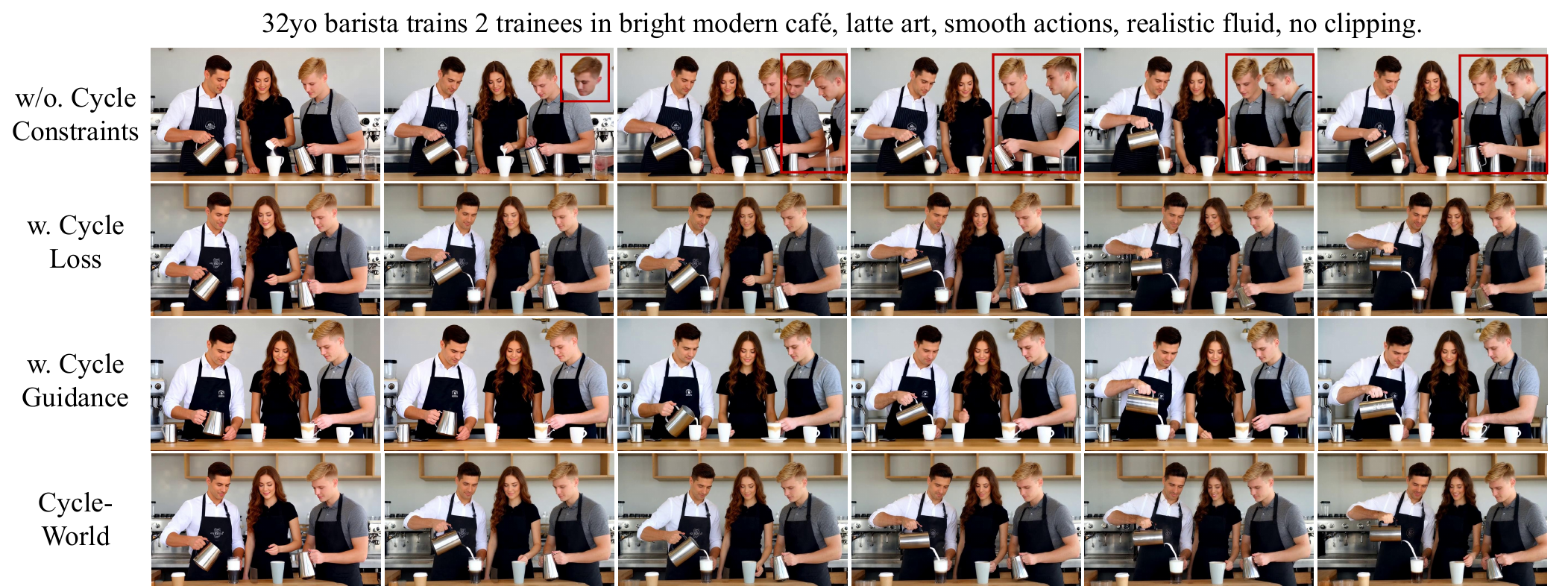}
\caption{\textbf{Qualitative ablation study of the proposed Cycle-World components.} We compare the visual quality and temporal consistency of different model variants. While the training cycle loss ($\mathcal{L}_{cycle}$) establishes a robust parametric foundation for structural stability, and the runtime corrector acts as an active safeguard against temporal artifacts, their combination achieves a powerful dual-phase synergy. The full Cycle-World model effectively prevents trajectory drift and background degradation, yielding superior results in long video generation.}
    \label{fig:ablation}
\end{figure}
\subsubsection{Effectiveness of Cycle-World Components.}
To evaluate the contributions of our proposed modules, we conduct an ablation study on visual quality and physical consistency. Table \ref{tab:ablation} reports the quantitative metrics on 5-second video generation, while Figure \ref{fig:ablation} illustrates the qualitative long-horizon stability over 30 seconds.

\noindent\textbf{Quantitative Synergy in Short-Horizon Generation.} As shown in Table \ref{tab:ablation}, the Baseline achieves a Total VBench score of 83.72 while scoring 61.94 on PC. Adding the Cycle-Consistent Learning (+ CCL) provides a learned parametric prior. While it moderately improves general visual metrics, its primary benefit is in physical consistency, raising PC by nearly 7 points. This suggests that penalizing reverse-prediction errors embeds causal constraints into the generator. Conversely, applying the runtime corrector solely during inference (+ CGI) acts as an instance-level regularizer against visual degradation, achieving the highest VBench Quality and Total scores. However, relying purely on gradient-based inference optimization without a learned parametric physical prior yields sub-optimal physical consistency.

The full Cycle-World model combines both mechanisms. While its general VBench scores are marginally lower than the CGI-only variant, it achieves the highest performance in physical consistency. This combined strategy ensures the generated videos maintain both visual quality and physical accuracy.

\noindent\textbf{Qualitative Results in Long-Horizon Generation.} The benefit of this combination is more evident when extending the generation to 30 seconds (Fig. \ref{fig:ablation}). In long-horizon synthesis, the Baseline exhibits rapid trajectory drift and background degradation. The +CCL variant preserves the core structural identity but struggles to suppress high-frequency temporal artifacts over extended autoregressive steps. The +CGI variant maintains aesthetic coherence locally but eventually drifts from the physical manifold due to the lack of an intrinsic causal prior. The full Cycle-World model combines these strengths. By using CCL to keep initial forward proposals close to the reversible manifold, the runtime corrector (CGI) performs more accurate latent refinement. This approach eliminates spatial distortion and temporal identity shifts, producing visually stable and physically grounded long videos.
\section{Conclusion}
\label{sec:conclusion}

In this paper, we presented Cycle-World, a novel autoregressive video generation framework designed to tackle the pervasive issue of unconstrained generative drift in long-horizon synthesis. Based on the theoretical insight that forward prediction errors can be strictly bottlenecked by enforcing temporal reversibility, we proposed a unified strategy that maintains causal consistency across both the training and inference phases. 
During training, we introduce a reverse-prediction cycle loss alongside the Distribution Matching Distillation (DMD) objective. This explicitly embeds causal constraints into the generator's parametric weights, establishing a robust foundation for structurally stable frame generation. During inference, we repurpose the frozen reverse model as a fully differentiable runtime corrector. By leveraging gradient-based cycle guidance, this mechanism acts as an active, instance-level safeguard, dynamically pruning out-of-distribution trajectory drift and temporal artifacts before they compound. 
Extensive experiments on the VBench benchmark demonstrate that this dual-phase synergy effectively prevents the structural collapse and aesthetic degradation typically observed in prolonged autoregressive generation. Consequently, Cycle-World achieves state-of-the-art performance in producing highly consistent, smooth, and high-fidelity 60-second videos.

\section*{Acknowledgement}
This work was supported by National Natural Science Foundation of China (No. 62302297, 625B2115, 62272447, 62472285, 72192821, 62472285), the Fundamental Research Funds for the Central Universities (YG2023QNB17, YG2024QNA44). [Re Prof Tao] This project is supported by the National Research Foundation, Singapore, under its NRF Professorship Award No. NRF-P2024-001.


%
%

\bibliographystyle{splncs04}
\bibliography{main}

@String(CVPR  = {IEEE Conf. Comput. Vis. Pattern Recog.})

@String(CVPR  = {CVPR})

@article{ddpm-ho2020denoising,
  title={Denoising diffusion probabilistic models},
  author={Ho, Jonathan and Jain, Ajay and Abbeel, Pieter},
  journal={Advances in neural information processing systems},
  volume={33},
  pages={6840--6851},
  year={2020}
}

@inproceedings{iddpm-nichol2021improved,
  title={Improved denoising diffusion probabilistic models},
  author={Nichol, Alexander Quinn and Dhariwal, Prafulla},
  booktitle={International conference on machine learning},
  pages={8162--8171},
  year={2021},
  organization={PMLR}
}

@article{ddim-song2020denoising,
  title={Denoising diffusion implicit models},
  author={Song, Jiaming and Meng, Chenlin and Ermon, Stefano},
  journal={arXiv preprint arXiv:2010.02502},
  year={2020}
}

@inproceedings{dit-peebles2023scalable,
  title={Scalable diffusion models with transformers},
  author={Peebles, William and Xie, Saining},
  booktitle={Proceedings of the IEEE/CVF international conference on computer vision},
  pages={4195--4205},
  year={2023}
}

@article{wan-2025wan,
  title={Wan: Open and advanced large-scale video generative models},
  author={Wan, Team and Wang, Ang and Ai, Baole and Wen, Bin and Mao, Chaojie and Xie, Chen-Wei and Chen, Di and Yu, Feiwu and Zhao, Haiming and Yang, Jianxiao and others},
  journal={arXiv preprint arXiv:2503.20314},
  year={2025}
}

@article{hunyuanvideo-kong2024hunyuanvideo,
  title={Hunyuanvideo: A systematic framework for large video generative models},
  author={Kong, Weijie and Tian, Qi and Zhang, Zijian and Min, Rox and Dai, Zuozhuo and Zhou, Jin and Xiong, Jiangfeng and Li, Xin and Wu, Bo and Zhang, Jianwei and others},
  journal={arXiv preprint arXiv:2412.03603},
  year={2024}
}

@article{Open-sora-zheng2024open,
  title={Open-sora: Democratizing efficient video production for all},
  author={Zheng, Zangwei and Peng, Xiangyu and Yang, Tianji and Shen, Chenhui and Li, Shenggui and Liu, Hongxin and Zhou, Yukun and Li, Tianyi and You, Yang},
  journal={arXiv preprint arXiv:2412.20404},
  year={2024}
}

@article{cogvideox-yang2024cogvideox,
  title={Cogvideox: Text-to-video diffusion models with an expert transformer},
  author={Yang, Zhuoyi and Teng, Jiayan and Zheng, Wendi and Ding, Ming and Huang, Shiyu and Xu, Jiazheng and Yang, Yuanming and Hong, Wenyi and Zhang, Xiaohan and Feng, Guanyu and others},
  journal={arXiv preprint arXiv:2408.06072},
  year={2024}
}

@article{cogvideo-hong2022cogvideo,
  title={Cogvideo: Large-scale pretraining for text-to-video generation via transformers},
  author={Hong, Wenyi and Ding, Ming and Zheng, Wendi and Liu, Xinghan and Tang, Jie},
  journal={arXiv preprint arXiv:2205.15868},
  year={2022}
}

@article{videopoet-kondratyuk2023videopoet,
  title={Videopoet: A large language model for zero-shot video generation},
  author={Kondratyuk, Dan and Yu, Lijun and Gu, Xiuye and Lezama, Jos{\'e} and Huang, Jonathan and Schindler, Grant and Hornung, Rachel and Birodkar, Vighnesh and Yan, Jimmy and Chiu, Ming-Chang and others},
  journal={arXiv preprint arXiv:2312.14125},
  year={2023}
}

@article{magi-1-dobrosotskaya2000magi,
  title={MAGI-1 interacts with $\beta$-catenin and is associated with cell--cell adhesion structures},
  author={Dobrosotskaya, Irina Y and James, Guy L},
  journal={Biochemical and biophysical research communications},
  volume={270},
  number={3},
  pages={903--909},
  year={2000},
  publisher={Elsevier}
}

@article{Pyramid-Flow-jin2024pyramidal,
  title={Pyramidal flow matching for efficient video generative modeling},
  author={Jin, Yang and Sun, Zhicheng and Li, Ningyuan and Xu, Kun and Jiang, Hao and Zhuang, Nan and Huang, Quzhe and Song, Yang and Mu, Yadong and Lin, Zhouchen},
  journal={arXiv preprint arXiv:2410.05954},
  year={2024}
}

@article{nova-deng2024autoregressive,
  title={Autoregressive video generation without vector quantization},
  author={Deng, Haoge and Pan, Ting and Diao, Haiwen and Luo, Zhengxiong and Cui, Yufeng and Lu, Huchuan and Shan, Shiguang and Qi, Yonggang and Wang, Xinlong},
  journal={arXiv preprint arXiv:2412.14169},
  year={2024}
}

@inproceedings{causvid-yin2025slow,
  title={From slow bidirectional to fast autoregressive video diffusion models},
  author={Yin, Tianwei and Zhang, Qiang and Zhang, Richard and Freeman, William T and Durand, Fredo and Shechtman, Eli and Huang, Xun},
  booktitle={Proceedings of the IEEE/CVF Conference on Computer Vision and Pattern Recognition},
  pages={22963--22974},
  year={2025}
}

@article{selfforcing-huang2025self,
  title={Self forcing: Bridging the train-test gap in autoregressive video diffusion},
  author={Huang, Xun and Li, Zhengqi and He, Guande and Zhou, Mingyuan and Shechtman, Eli},
  journal={arXiv preprint arXiv:2506.08009},
  year={2025}
}

@inproceedings{Seine-chen2023seine,
  title={Seine: Short-to-long video diffusion model for generative transition and prediction},
  author={Chen, Xinyuan and Wang, Yaohui and Zhang, Lingjun and Zhuang, Shaobin and Ma, Xin and Yu, Jiashuo and Wang, Yali and Lin, Dahua and Qiao, Yu and Liu, Ziwei},
  booktitle={The Twelfth International Conference on Learning Representations},
  year={2023}
}

@article{Gen-l-video-wang2023gen,
  title={Gen-l-video: Multi-text to long video generation via temporal co-denoising},
  author={Wang, Fu-Yun and Chen, Wenshuo and Song, Guanglu and Ye, Han-Jia and Liu, Yu and Li, Hongsheng},
  journal={arXiv preprint arXiv:2305.18264},
  year={2023}
}

@article{freenoise-qiu2023freenoise,
  title={Freenoise: Tuning-free longer video diffusion via noise rescheduling},
  author={Qiu, Haonan and Xia, Menghan and Zhang, Yong and He, Yingqing and Wang, Xintao and Shan, Ying and Liu, Ziwei},
  journal={arXiv preprint arXiv:2310.15169},
  year={2023}
}

@inproceedings{Nuwa-xl-yin2023nuwa,
  title={Nuwa-xl: Diffusion over diffusion for extremely long video generation},
  author={Yin, Shengming and Wu, Chenfei and Yang, Huan and Wang, Jianfeng and Wang, Xiaodong and Ni, Minheng and Yang, Zhengyuan and Li, Linjie and Liu, Shuguang and Yang, Fan and others},
  booktitle={Proceedings of the 61st Annual Meeting of the Association for Computational Linguistics (Volume 1: Long Papers)},
  pages={1309--1320},
  year={2023}
}

@inproceedings{Align-your-latents-blattmann2023align,
  title={Align your latents: High-resolution video synthesis with latent diffusion models},
  author={Blattmann, Andreas and Rombach, Robin and Ling, Huan and Dockhorn, Tim and Kim, Seung Wook and Fidler, Sanja and Kreis, Karsten},
  booktitle={Proceedings of the IEEE/CVF conference on computer vision and pattern recognition},
  pages={22563--22575},
  year={2023}
}

@article{Skyreels-v2-chen2025skyreels,
  title={Skyreels-v2: Infinite-length film generative model},
  author={Chen, Guibin and Lin, Dixuan and Yang, Jiangping and Lin, Chunze and Zhu, Junchen and Fan, Mingyuan and Zhang, Hao and Chen, Sheng and Chen, Zheng and Ma, Chengcheng and others},
  journal={arXiv preprint arXiv:2504.13074},
  year={2025}
}

@article{Diffusion-forcing-chen2024diffusion,
  title={Diffusion forcing: Next-token prediction meets full-sequence diffusion},
  author={Chen, Boyuan and Mart{\'\i} Mons{\'o}, Diego and Du, Yilun and Simchowitz, Max and Tedrake, Russ and Sitzmann, Vincent},
  journal={Advances in Neural Information Processing Systems},
  volume={37},
  pages={24081--24125},
  year={2024}
}

@article{longlive-yang2025longlive,
  title={Longlive: Real-time interactive long video generation},
  author={Yang, Shuai and Huang, Wei and Chu, Ruihang and Xiao, Yicheng and Zhao, Yuyang and Wang, Xianbang and Li, Muyang and Xie, Enze and Chen, Yingcong and Lu, Yao and others},
  journal={arXiv preprint arXiv:2509.22622},
  year={2025}
}

@article{dmd2-yin2024improved,
  title={Improved distribution matching distillation for fast image synthesis},
  author={Yin, Tianwei and Gharbi, Micha{\"e}l and Park, Taesung and Zhang, Richard and Shechtman, Eli and Durand, Fredo and Freeman, Bill},
  journal={Advances in neural information processing systems},
  volume={37},
  pages={47455--47487},
  year={2024}
}

@inproceedings{dmd-yin2024one,
  title={One-step diffusion with distribution matching distillation},
  author={Yin, Tianwei and Gharbi, Micha{\"e}l and Zhang, Richard and Shechtman, Eli and Durand, Fredo and Freeman, William T and Park, Taesung},
  booktitle={Proceedings of the IEEE/CVF conference on computer vision and pattern recognition},
  pages={6613--6623},
  year={2024}
}

@inproceedings{TimeGrad-rasul2021autoregressive,
  title={Autoregressive denoising diffusion models for multivariate probabilistic time series forecasting},
  author={Rasul, Kashif and Seward, Calvin and Schuster, Ingmar and Vollgraf, Roland},
  booktitle={International conference on machine learning},
  pages={8857--8868},
  year={2021},
  organization={PMLR}
}

@article{teacher-forcing-williams1989learning,
  title={A learning algorithm for continually running fully recurrent neural networks},
  author={Williams, Ronald J and Zipser, David},
  journal={Neural computation},
  volume={1},
  number={2},
  pages={270--280},
  year={1989},
  publisher={MIT Press}
}

@misc{sora-openai2024sora,
  author       = {OpenAI},
  title        = {Sora},
  year         = {2024},
  howpublished = {\url{https://openai.com/sora}},
}

@misc{sora2openai2025sora2,
  author       = {OpenAI},
  title        = {Sora 2},
  year         = {2025},
  howpublished = {\url{https://openai.com/index/sora-2/}},
}

@article{seedance-gao2025seedance,
  title={Seedance 1.0: Exploring the boundaries of video generation models},
  author={Gao, Yu and Guo, Haoyuan and Hoang, Tuyen and Huang, Weilin and Jiang, Lu and Kong, Fangyuan and Li, Huixia and Li, Jiashi and Li, Liang and Li, Xiaojie and others},
  journal={arXiv preprint arXiv:2506.09113},
  year={2025}
}

@misc{seedance2,
  author       = {ByteDance Seed},
  title        = {Seedance 2.0},
  year         = {2026},
  howpublished = {\url{https://seed.bytedance.com/en/seedance2_0}},
}

@misc{kling,
  author       = {Kuaishou Technology},
  title        = {Kling},
  year         = {2025},
  howpublished = {\url{https://kling.kuaishou.com/}},
}

@article{Rolling-forcingliu2025rolling,
  title={Rolling forcing: Autoregressive long video diffusion in real time},
  author={Liu, Kunhao and Hu, Wenbo and Xu, Jiale and Shan, Ying and Lu, Shijian},
  journal={arXiv preprint arXiv:2509.25161},
  year={2025}
}

@article{Infinity-rope-yesiltepe2025infinity,
  title={Infinity-rope: Action-controllable infinite video generation emerges from autoregressive self-rollout},
  author={Yesiltepe, Hidir and Meral, Tuna Han Salih and Akan, Adil Kaan and Oktay, Kaan and Yanardag, Pinar},
  journal={arXiv preprint arXiv:2511.20649},
  year={2025}
}

@article{ltxvideo-HaCohen2024LTX,
  title={LTX-Video: Realtime Video Latent Diffusion},
  author={HaCohen, Yoav and Chiprut, Nisan and Brazowski, Benny and Shalem, Daniel and Moshe, Dudu and Richardson, Eitan and Levin, Eran and Shiran, Guy and Zabari, Nir and Gordon, Ori and Panet, Poriya and Weissbuch, Sapir and Kulikov, Victor and Bitterman, Yaki and Melumian, Zeev and Bibi, Ofir},
  journal={arXiv preprint arXiv:2501.00103},
  year={2024}
}

@article{contextforcing-chen2026context,
  title={Context Forcing: Consistent Autoregressive Video Generation with Long Context},
  author={Chen, Shuo and Wei, Cong and Sun, Sun and Nie, Ping and Zhou, Kai and Zhang, Ge and Yang, Ming-Hsuan and Chen, Wenhu},
  journal={arXiv preprint arXiv:2602.06028},
  year={2026}
}

@article{selfforcingplusplus-cui2025self,
  title={Self-forcing++: Towards minute-scale high-quality video generation},
  author={Cui, Justin and Wu, Jie and Li, Ming and Yang, Tao and Li, Xiaojie and Wang, Rui and Bai, Andrew and Ban, Yuanhao and Hsieh, Cho-Jui},
  journal={arXiv preprint arXiv:2510.02283},
  year={2025}
}

@InProceedings{vbench-huang2023vbench,
     title={{VBench}: Comprehensive Benchmark Suite for Video Generative Models},
     author={Huang, Ziqi and He, Yinan and Yu, Jiashuo and Zhang, Fan and Si, Chenyang and Jiang, Yuming and Zhang, Yuanhan and Wu, Tianxing and Jin, Qingyang and Chanpaisit, Nattapol and Wang, Yaohui and Chen, Xinyuan and Wang, Limin and Lin, Dahua and Qiao, Yu and Liu, Ziwei},
     booktitle={Proceedings of the IEEE/CVF Conference on Computer Vision and Pattern Recognition},
     year={2024}
 }

@article{vbench2-zheng2025vbench2,
     title={{VBench-2.0}: Advancing Video Generation Benchmark Suite for Intrinsic Faithfulness},
     author={Zheng, Dian and Huang, Ziqi and Liu, Hongbo and Zou, Kai and He, Yinan and Zhang, Fan and Zhang, Yuanhan and He, Jingwen and Zheng, Wei-Shi and Qiao, Yu and Liu, Ziwei},
     journal={arXiv preprint arXiv:2503.21755},
     year={2025}
 }

@article{videophy-bansal2024videophy,
  title={Videophy: Evaluating physical commonsense for video generation},
  author={Bansal, Hritik and Lin, Zongyu and Xie, Tianyi and Zong, Zeshun and Yarom, Michal and Bitton, Yonatan and Jiang, Chenfanfu and Sun, Yizhou and Chang, Kai-Wei and Grover, Aditya},
  journal={arXiv preprint arXiv:2406.03520},
  year={2024}
}

@article{videophy2-bansal2025videophy,
  title={Videophy-2: A challenging action-centric physical commonsense evaluation in video generation},
  author={Bansal, Hritik and Peng, Clark and Bitton, Yonatan and Goldenberg, Roman and Grover, Aditya and Chang, Kai-Wei},
  journal={arXiv preprint arXiv:2503.06800},
  year={2025}
}

@article{vidprom-wang2024vidprom,
  title={VidProM: A Million-scale Real Prompt-Gallery Dataset for Text-to-Video Diffusion Models},
  author={Wang, Wenhao and Yang, Yi},
  booktitle={Thirty-eighth Conference on Neural Information Processing Systems},
  year={2024},
  url={https://openreview.net/forum?id=pYNl76onJL}
}

@inproceedings{zhu2017unpaired,
  title={Unpaired image-to-image translation using cycle-consistent adversarial networks},
  author={Zhu, Jun-Yan and Park, Taesung and Isola, Phillip and Efros, Alexei A},
  booktitle={Proceedings of the IEEE international conference on computer vision},
  pages={2223--2232},
  year={2017}
}

@ARTICLE{Full-Frame-Video-Stabilization,
  author={Karacan, Levent and Sarıgül, Mehmet},
  journal={Computational Visual Media}, 
  title={Full-Frame Video Stabilization via Spatiotemporal Transformers}, 
  year={2025},
  volume={11},
  number={3},
  pages={655-667},
  doi={10.26599/CVM.2025.9450416}}

@ARTICLE{Temporally-consistent-video-colorization,
  author={Liu, Yihao and Zhao, Hengyuan and Chan, Kelvin C. K. and Wang, Xintao and Loy, Chen Change and Qiao, Yu and Dong, Chao},
  journal={Computational Visual Media}, 
  title={Temporally consistent video colorization with deep feature propagation and self-regularization learning}, 
  year={2024},
  volume={10},
  number={2},
  pages={375-395},
  doi={10.1007/s41095-023-0342-8}}

@ARTICLE{a-causal-cnn,
  author={Hou, Shuaiying and Wang, Congyi and Zhuang, Wenlin and Chen, Yu and Wang, Yangang and Bao, Hujun and Chai, Jinxiang and Xu, Weiwei},
  journal={Computational Visual Media}, 
  title={A causal convolutional neural network for multi-subject motion modeling and generation}, 
  year={2024},
  volume={10},
  number={1},
  pages={45-59},
  doi={10.1007/s41095-022-0307-3}}

@article{zhu2026causalforcing,
  title={Causal Forcing: Autoregressive Diffusion Distillation Done Right for High-Quality Real-Time Interactive Video Generation},
  author={Zhu, Hongzhou and Zhao, Min and He, Guande and Su, Hang and Li, Chongxuan and Zhu, Jun},
  journal={arXiv preprint arXiv:2602.02214},
  year={2026}
}

@misc{MetaWorld,
      title={MetaWorld: Scaling Multi-Agent Video World Model from Single-view Video Data}, 
      author={Teng Hu and Mingchun Lu and Yating Wang and Jiangning Zhang and Jinkun Hao and Ye Pan and Ran Yi and Lizhuang Ma and Dacheng Tao},
      year={2026},
      eprint={2606.02753},
      archivePrefix={arXiv},
      primaryClass={cs.CV},
      url={https://arxiv.org/abs/2606.02753}, 
}

@inproceedings{PolyVivid,
 author = {Hu, Teng and Yu, Zhentao and Zhou, Zhengguang and Zhang, Jiangning and Zhou, Yuan and Lu, Qinglin and Yi, Ran},
 booktitle = {Advances in Neural Information Processing Systems},
 editor = {D. Belgrave and C. Zhang and H. Lin and R. Pascanu and P. Koniusz and M. Ghassemi and N. Chen},
 pages = {49394--49420},
 publisher = {Curran Associates, Inc.},
 title = {PolyVivid: Vivid Multi-Subject Video Generation with Cross-Modal Interaction and Enhancement},
 url = {https://proceedings.neurips.cc/paper_files/paper/2025/file/4683beb6bab325650db13afd05d1a14a-Paper-Conference.pdf},
 volume = {38},
 year = {2025}
}

@misc{hu2025hunyuancustommultimodaldrivenarchitecturecustomized,
      title={HunyuanCustom: A Multimodal-Driven Architecture for Customized Video Generation}, 
      author={Teng Hu and Zhentao Yu and Zhengguang Zhou and Sen Liang and Yuan Zhou and Qin Lin and Qinglin Lu},
      year={2025},
      eprint={2505.04512},
      archivePrefix={arXiv},
      primaryClass={cs.CV},
      url={https://arxiv.org/abs/2505.04512}, 
}

@misc{hu2025ultragenhighresolutionvideogeneration,
      title={UltraGen: High-Resolution Video Generation with Hierarchical Attention}, 
      author={Teng Hu and Jiangning Zhang and Zihan Su and Ran Yi},
      year={2025},
      eprint={2510.18775},
      archivePrefix={arXiv},
      primaryClass={cs.CV},
      url={https://arxiv.org/abs/2510.18775}, 
}

@inproceedings{zhang2026uniavgen,
  title={Uniavgen: Unified audio and video generation with asymmetric cross-modal interactions},
  author={Zhang, Guozhen and Zhou, Zixiang and Hu, Teng and Peng, Ziqiao and Zhang, Youliang and Chen, Yi and Zhou, Yuan and Lu, Qinglin and Wang, Limin},
  booktitle={Proceedings of the IEEE/CVF Conference on Computer Vision and Pattern Recognition},
  pages={1950--1960},
  year={2026}
}

@InProceedings{Harmony,
    author    = {Hu, Teng and Yu, Zhentao and Zhang, Guozhen and Su, Zihan and Zhou, Zhengguang and Zhang, Youliang and Zhou, Yuan and Lu, Qinglin and Yi, Ran},
    title     = {Harmony: Harmonizing Audio and Video Generation through Cross-Task Synergy},
    booktitle = {Proceedings of the IEEE/CVF Conference on Computer Vision and Pattern Recognition (CVPR)},
    month     = {June},
    year      = {2026},
    pages     = {16085-16095}
}

@misc{low2025Ovi,
      title={Ovi: Twin Backbone Cross-Modal Fusion for Audio-Video Generation}, 
      author={Chetwin Low and Weimin Wang and Calder Katyal},
      year={2025},
      eprint={2510.01284},
      archivePrefix={arXiv},
      primaryClass={cs.MM},
      url={https://arxiv.org/abs/2510.01284}, 
}

@inproceedings{liu2026javisdit++,
  title     = {JavisDiT++: Unified Modeling and Optimization for Joint Audio-Video Generation},
  author    = {Liu, Kai and Zheng, Yanhao and Wang, Kai and Wu, Shengqiong and Zhang, Rongjunchen and Luo, Jiebo and Hatzinakos, Dimitrios and Liu, Ziwei and Fei, Hao and Chua, Tat-Seng},
  booktitle = {The Fourteenth International Conference on Learning Representations},
  year      = {2026},
}

@article{wang2025universe,
  title={UniVerse-1: Unified Audio-Video Generation via Stitching of Experts},
  author={Wang, Duomin and Zuo, Wei and Li, Aojie and Chen, Ling-Hao and Liao, Xinyao and Zhou, Deyu and Yin, Zixin and Dai, Xili and Jiang, Daxin and Yu, Gang},
  journal={arXiv preprint arXiv:2509.06155},
  year={2025}
}

@inproceedings{OmniVCus,
 author = {Cai, Yuanhao and Zhang, HE and Chen, Xi and Xing, Jinbo and Hu, Yiwei and Zhou, Yuqian and Zhang, Kai and Zhang, Zhifei and Kim, Soo Ye and Wang, Tianyu and Zhang, Yulun and Yang, Xiaokang and Lin, Zhe and Yuille, Alan},
 booktitle = {Advances in Neural Information Processing Systems},
 editor = {D. Belgrave and C. Zhang and H. Lin and R. Pascanu and P. Koniusz and M. Ghassemi and N. Chen},
 pages = {115404--115423},
 publisher = {Curran Associates, Inc.},
 title = {OmniVCus: Feedforward Subject-driven Video Customization with Multimodal Control Conditions},
 url = {https://proceedings.neurips.cc/paper_files/paper/2025/file/a79054a9da91d73ed3cb1a9e87d7cd2d-Paper-Conference.pdf},
 volume = {38},
 year = {2025}
}

@inproceedings{wang2025lingen,
  title={Lingen: Towards high-resolution minute-length text-to-video generation with linear computational complexity},
  author={Wang, Hongjie and Ma, Chih-Yao and Liu, Yen-Cheng and Hou, Ji and Xu, Tao and Wang, Jialiang and Juefei-Xu, Felix and Luo, Yaqiao and Zhang, Peizhao and Hou, Tingbo and others},
  booktitle={Proceedings of the Computer Vision and Pattern Recognition Conference},
  pages={2578--2588},
  year={2025}
}

@article{hyworld2025,
  title={HY-World 1.5: A Systematic Framework for Interactive World Modeling with Real-Time Latency and Geometric Consistency},
  author={Team HunyuanWorld},
  journal={arXiv preprint},
  year={2025}
}

@misc{2026matrix,
title={Matrix-Game 3.0: Real-Time and Streaming Interactive World Model with Long-Horizon Memory},
author={{Skywork AI Matrix-Game Team}},
year={2026},
howpublished={Technical report},
url={https://github.com/SkyworkAI/Matrix-Game/blob/main/Matrix-Game-3/assets/pdf/report.pdf}
}

@article{mao2025yume,
  title={Yume-1.5: A Text-Controlled Interactive World Generation Model},
  author={Mao, Xiaofeng and Li, Zhen and Li, Chuanhao and Xu, Xiaojie and Ying, Kaining and He, Tong and Pang, Jiangmiao and Qiao, Yu and Zhang, Kaipeng},
  journal={arXiv preprint arXiv:2512.22096},
  year={2025}
}

@misc{hu2026evolutionvideogenerativefoundations,
      title={Evolution of Video Generative Foundations}, 
      author={Teng Hu and Jiangning Zhang and Hongrui Huang and Ran Yi and Zihan Su and Jieyu Weng and Zhucun Xue and Lizhuang Ma and Ming-Hsuan Yang and Dacheng Tao},
      year={2026},
      eprint={2604.06339},
      archivePrefix={arXiv},
      primaryClass={cs.CV},
      url={https://arxiv.org/abs/2604.06339}, 
}
\clearpage


\renewcommand\thefigure{S\arabic{figure}}   
\renewcommand\thetable{S\arabic{table}}     
\renewcommand\theequation{S\arabic{equation}} 

\setcounter{equation}{0}
\setcounter{table}{0}
\setcounter{figure}{0}

\appendix
\label{appendix}

\section{Overview}
\label{sec:overview}

This supplementary material provides further theoretical derivations, algorithmic details, rigorous experimental setups, and extensive qualitative and quantitative evaluations to thoroughly support the claims made in the main manuscript. The document is logically organized as follows:

\begin{itemize}
    \item \textbf{Section \ref{sec:detailed_proofs}} presents the detailed mathematical proofs for the theoretical bounds established in the main text (Theorem 1, Corollary 1, and Proposition 1).
    \item \textbf{Section \ref{sec:algorithm}} provides the complete pseudocode and a step-by-step breakdown of our proposed zero-shot Cycle-Guided Inference (CGI) strategy.
    \item \textbf{Section \ref{sec:implementation_details}} outlines the comprehensive implementation details, including data preparation, model construction, and hyperparameter configurations.
    \item \textbf{Section \ref{sec:eval_metrics_physical}} details the exact definitions and prompt templates used for our Physical Consistency evaluation metrics (PC and PACE).
    \item \textbf{Section \ref{sec:extended_evaluation}} provides extended quantitative comparisons and ablation studies on the challenging 60-second video generation tasks.
    \item \textbf{Section \ref{sec:interactive_generation} \& \ref{sec:single_prompt_generation}} showcase additional qualitative comparisons, highlighting Cycle-World's robustness in interactive generation and single-prompt extended synthesis.
    \item \textbf{Section \ref{sec:hyperparameters}} presents a thorough sensitivity analysis of key training and inference hyperparameters.
    \item \textbf{Section \ref{sec:plug_and_play}} further validates the plug-and-play extensibility of CGI on standard forward-only baselines.
    \item \textbf{Section \ref{sec:related_cycle_consistency}} clarifies the fundamental distinctions between our temporal cycle-consistency framework and traditional spatial cycle-consistency models.
    \item \textbf{Section \ref{sec:applicability}} concludes with a rigorous discussion on the theoretical boundaries and broader applicability of the Cycle-World framework to other sequential modalities.
\end{itemize}

\section{Detailed Proofs of Theoretical Bounds}
\label{sec:detailed_proofs}

This section provides the complete mathematical derivations for the theoretical claims established in the main manuscript. We rely on the definitions of generative drift $e_n = \left\| \hat{z}_n - z_n \right\|$ and the single-step cycle-consistency distance $d_{cycle}^{(n)} = \left\| \hat{z}_{n-1} - R_\phi(\hat{z}_n) \right\|$.

\subsection{Proof of Theorem 1 (Cycle-Bounded Drift)}
\begin{proof}
Applying the reverse-Lipschitz condition (Assumption 2) to the generated state $\hat{z}_n$ and the ground-truth state $z_n$, we have:
\begin{equation*}
    e_n = \left\| \hat{z}_n - z_n \right\| \le C \left\| R_\phi(\hat{z}_n) - R_\phi(z_n) \right\|.
\end{equation*}
By applying the triangle inequality and adding/subtracting both the generated history $\hat{z}_{n-1}$ and the ground-truth history $z_{n-1}$ inside the norm on the right side, we obtain:
\begin{equation*}
    e_n \le C \left( \left\| R_\phi(\hat{z}_n) - \hat{z}_{n-1} \right\| + \left\| \hat{z}_{n-1} - z_{n-1} \right\| + \left\| z_{n-1} - R_\phi(z_n) \right\| \right).
\end{equation*}
Substituting the definitions of the cycle-consistency distance $d_{cycle}^{(n)}$, the previous step's generative drift $e_{n-1}$, and the inherent reverse approximation error $\epsilon_R$ (Assumption 1) yields the step-wise recurrence:
\begin{equation*}
    e_n \le C \left( d_{cycle}^{(n)} + e_{n-1} + \epsilon_R \right).
\end{equation*}
\end{proof}

\subsection{Proof of Corollary 1 (Long-Horizon Error Bound)}
\begin{proof}
Starting from Theorem 1, we recursively substitute the recurrence for $e_{n-1}$:
\begin{align*}
    e_n &\le C e_{n-1} + C \left( d_{cycle}^{(n)} + \epsilon_R \right) \\
        &\le C \left[ C e_{n-2} + C \left( d_{cycle}^{(n-1)} + \epsilon_R \right) \right] + C \left( d_{cycle}^{(n)} + \epsilon_R \right) \\
        &= C^2 e_{n-2} + C^2 \left( d_{cycle}^{(n-1)} + \epsilon_R \right) + C \left( d_{cycle}^{(n)} + \epsilon_R \right).
\end{align*}
Continuing this expansion down to the initial condition $e_0$ yields the summation: $e_n \le C^n e_0 + \sum_{i=1}^n C^{n-i+1} \left( d_{cycle}^{(i)} + \epsilon_R \right)$. Factoring out the maximum local loss $\delta_{cycle}$ resolves it into the closed-form geometric expression in Eq.2 of the main text.
\end{proof}

\subsection{Proof of Proposition 1 (Theoretical Advantage over Unconstrained Baselines)}
\begin{proof}
Applying Corollary 1 to both generation pipelines, their respective drift limits are:
\begin{align*}
    E_n^{unc} &= C^n e_0 + (\delta_{unc} + \epsilon_R) C \frac{C^n - 1}{C - 1}, \\
    E_n^{ours} &= C^n e_0 + (\delta_{cycle} + \epsilon_R) C \frac{C^n - 1}{C - 1}.
\end{align*}
Subtracting $E_n^{ours}$ from $E_n^{unc}$ immediately yields the reduction gap:
\begin{equation*}
    \Delta E_n = (\delta_{unc} - \delta_{cycle}) C \frac{C^n - 1}{C - 1}.
\end{equation*}
Since $\delta_{unc} > \delta_{cycle}$ and $C \ne 1$, it is evident that $\Delta E_n > 0$.

\end{proof}

\section{Algorithm for Cycle-Guided Inference}
\label{sec:algorithm}

As detailed in the main manuscript, Cycle-Guided Inference (CGI) serves as a zero-shot, training-free latent optimization strategy to enforce temporal reversibility in frozen foundational models. Algorithm \ref{alg:inference} provides the complete step-by-step pseudocode for this procedure. 

Specifically, during a designated optimization window $[T_{\text{start}}, T_{\text{end}}]$ within the autoregressive diffusion loop, CGI computes the Euclidean cycle discrepancy $\mathcal{D}$ using the frozen reverse corrector $R_\phi$. Because the reverse reconstruction is fully differentiable within the latent space, the gradient of $\mathcal{D}$ is iteratively backpropagated to actively refine the intermediate noisy latent $z_{n,t}$. This runtime rectification acts as a structural bottleneck, pruning non-physical artifacts before the final latent is permanently committed to the historical context buffer.

\begin{algorithm}[t]
\caption{Cycle-Guided Inference with Runtime Corrector}
\label{alg:inference}
\begin{algorithmic}[1]
\Require Forward generator $G_\theta$, Reverse corrector $R_\phi$, Diffusion transition function $\Psi$, Initial context $\hat{z}_0$, Sequence length $N$, Timesteps $\{t_T, \dots, t_1\}$, Optimization window $[T_{\text{start}}, T_{\text{end}}]$, Context noise $t_{\text{ctx}}$, Refinement iterations $K$, Step size $\eta$.
\Ensure Generated video latents $\hat{Z} = \{\hat{z}_1, \dots, \hat{z}_N\}$
\State Initialize history buffer $\mathcal{H} = \{\hat{z}_0\}$
\State Initialize output sequence $\hat{Z} = \emptyset$
\For{$n = 1$ to $N$}
    \State Sample initial noise $z_{n,t_T} \sim \mathcal{N}(0, \mathbf{I})$
    \For{$j = T$ down to $1$}
        \State Let $t = t_j$
        \If{$n > 1$ and $t \in [T_{\text{start}}, T_{\text{end}}]$} \Comment{Cycle Guidance Optimization Window}
            \State $z_{n,t}^{(0)} \gets z_{n,t}$
            \State Sample target noise $\epsilon \sim \mathcal{N}(0, \mathbf{I})$
            \For{$k = 0$ to $K-1$}
                \State $\hat{z}_{n|t} \gets G_\theta(z_{n,t}^{(k)}, t, \mathcal{H})$ \Comment{Predict clean latent}
                \State $R_\phi(\mathcal{F}(\hat{z}_{n|t}), t_{\text{ctx}}, \mathcal{H}_{\text{rev}})$ \Comment{Construct reverse context}
                \State $\tilde{z}_{n-1} \gets \mathcal{F}\left(R_\phi(\epsilon, t_T, \mathcal{H}_{\text{rev}})\right)$ \Comment{Predict predecessor from noise}
                \State $\mathcal{D} \gets \left\| \hat{z}_{n-1} - \tilde{z}_{n-1} \right\|_2^2$ \Comment{Compute cycle discrepancy}
                \State $z_{n,t}^{(k+1)} \gets z_{n,t}^{(k)} - \eta \nabla_{z_{n,t}^{(k)}} \mathcal{D}$ \Comment{Gradient descent update}
            \EndFor
            \State $z_{n,t} \gets \text{Detach}(z_{n,t}^{(K)})$ \Comment{Final refined state}
        \EndIf
        
        \State $\hat{z}_{n|t} \gets G_\theta(z_{n,t}, t, \mathcal{H})$ \Comment{Forward prediction for transition}
        
        \If{$j = 1$} \Comment{Final Denoising Step}
            \State $\hat{z}_n \gets \hat{z}_{n|t}$ \Comment{Final generated clean latent}
            \State $\mathcal{H} \gets \mathcal{H} \cup \{\hat{z}_n\}$ \Comment{Update KV cache}
            \State $\hat{Z} \gets \hat{Z} \cup \{\hat{z}_n\}$ \Comment{Update output sequence}
        \Else \Comment{Intermediate Denoising Step}
            \State Sample $\epsilon_{\text{trans}} \sim \mathcal{N}(0, \mathbf{I})$
            \State $z_{n,t_{j-1}} \gets \Psi(\hat{z}_{n|t}, \epsilon_{\text{trans}}, t_{j-1})$ \Comment{Standard diffusion transition}
        \EndIf
    \EndFor
\EndFor
\State \Return $\hat{Z}$
\end{algorithmic}
\end{algorithm}

\section{Implementation Details}
\label{sec:implementation_details}

In this section, we provide comprehensive implementation details of our proposed framework, covering dataset preparation, model configuration, training strategies, and hyperparameter settings.

\noindent\textbf{Data Preparation.} 
For the Ordinary Differential Equation (ODE) initialization phase in the diffusion forcing\cite{Diffusion-forcing-chen2024diffusion} process, the ground-truth ODE latents are acquired using the Wan2.1 14B model\cite{wan-2025wan}, as provided by CausVid~\cite{causvid-yin2025slow}. For the subsequent DMD training phase, we utilize a filtered and augmented version of the VidProM\cite{vidprom-wang2024vidprom,selfforcing-huang2025self} prompt dataset.

\noindent\textbf{Reverse Model Construction.} 
The reverse prediction model is initialized from the pre-trained Wan2.1 1.3B model. During the ODE initialization phase, we temporally flip the latent trajectories of the forward videos to initialize the reverse model. Subsequently, the model is trained utilizing the self-forcing paradigm. A key efficiency of our implementation is that, during training, we simply reverse the streaming latent outputs along the temporal dimension. This elegant operation allows us to directly reuse the bidirectional Wan model—which originally serves the forward prediction model—as both the teacher model and the critic, thereby eliminating the prohibitive computational cost of training a dedicated reverse teacher model.

\noindent\textbf{Forward Model Tuning.} 
The forward generator is built upon the pre-trained LongLive\cite{longlive-yang2025longlive} framework. We adapt streaming long tuning\cite{longlive-yang2025longlive} on 60-second video sequences, where each sequence contains a single prompt switch to encourage dynamic context transition. The forward model is fine-tuned using Low-Rank Adaptation (LoRA) on the LongLive weights. The training process spans a total of 3,000 iterations with a batch size of 4.

\noindent\textbf{Hyperparameter Configurations.} 
During the joint training phase, the weight for the cycle-consistency loss ($\lambda$) is empirically set to $0.1$. During the inference phase, the cycle guidance is applied across all denoising timesteps. For the runtime corrector, we set the number of optimization iterations per timestep to $K=1$, with a optimization step size of $\eta=10$. All other unmentioned architectural and optimization hyperparameters strictly adhere to the default configurations of LongLive.

\section{Physical Consistency Evaluation Metrics}
\label{sec:eval_metrics_physical}

To rigorously assess the physical realism of the generated videos, we utilize two complementary metrics: Physical Commonsense (PC) and Physical Alignment and Consistency Evaluation (PACE). We randomly sample 150 prompts from the VideoPhy\cite{videophy-bansal2024videophy,videophy2-bansal2025videophy} prompt set to synthesize the test videos for our evaluation protocol. For short-horizon evaluation on 5-second videos, the metrics are computed directly on the entire clip. For long-horizon evaluation on 60-second videos, we divide each generated sequence into non-overlapping 5-second chunks and randomly sample three distinct chunks per video for assessment.

\noindent\textbf{Physical Commonsense (PC).} 
The PC metric employs the automatic scoring model introduced in the VideoPhy benchmark, which assigns a discrete physical consistency score ranging from 1 to 5 to each video clip. To establish a stringent baseline for physical realism, we define the final PC score as the percentage of evaluated clips that achieve a score of 4 or higher.

\noindent\textbf{Physical Alignment and Consistency Evaluation (PACE).} 
To obtain a granular, human-aligned assessment of complex physical dynamics, we introduce PACE, a Multimodal-LLM-as-a-Judge metric. We utilize the Gemini model to evaluate the videos based on a dedicated prompt. The model is instructed to act as an expert assessor and output a comprehensive score from 0 to 100. The evaluation explicitly penalizes physical hallucinations by scrutinizing basic prompt compliance alongside four core physical dimensions: the realistic representation of gravity and mass, the natural dynamics of collisions, motion continuity (avoiding sudden teleportation or unnatural warping), and long-term temporal coherence. To facilitate automated parsing, the model is constrained to return the evaluation result strictly in JSON format. The exact prompt template provided to the model is as follows:

\begin{quote}
Act as an expert in video quality assessment and physics. Evaluate the provided video based on these criteria: (1) Prompt Compliance: Does the video content strictly follow the intended action and description? (2) Physical Consistency: Does the video adhere to real-world physical laws? Look for gravity and weight (do objects fall or move with realistic mass?), collisions (do interactions between objects look natural?), motion continuity (is there any sudden teleportation or unnatural warping?), and temporal coherence (does the scene remain consistent over time?). Provide a score from 0 to 100 and a concise justification for your rating. Return the result strictly in JSON format with 'score' and 'reason' keys.
\end{quote}

\section{Extended Evaluation on Physical Consistency}
\label{sec:extended_evaluation}

To comprehensively validate the effectiveness of our proposed framework in extremely long sequences, we first present the extended quantitative comparisons and ablation studies focusing on the 60-second video generation tasks.

\begin{table}[htbp]
\centering
\caption{Quantitative comparison of physical consistency on 60-second video generation.}
\label{tab:phys_comp_60s}
\begin{tabular}{lccc}
\toprule
\textbf{Method} & \textbf{PC} & \textbf{PACE} & \textbf{Average} \\
\midrule
Self Forcing & 65.67 & 49.96 & 57.82 \\
LongLive & 59.45 & \textbf{75.71} & 67.58 \\
Rolling Forcing & 61.69 & 74.59 & 68.14 \\
Infinity-RoPE & 63.18 & 72.77 & 67.98 \\
\textbf{Ours} & \textbf{70.90} & 75.42 & \textbf{73.16} \\
\bottomrule
\end{tabular}
\end{table}

\noindent\textbf{Superiority in Long-Horizon Physical Consistency.}
To evaluate the robustness of our method over extended sequences, we compare its physical consistency against state-of-the-art baselines on 60-second generation, as detailed in Table~\ref{tab:phys_comp_60s}. The results demonstrate that standard autoregressive approaches like Self Forcing struggle with complex physical dynamics over time, yielding an exceptionally low PACE score of 49.96 despite maintaining a moderate physical commonsense score. While long-context models such as LongLive and Rolling Forcing improve PACE, their overall physical consistency remains bounded around an average of 68. Our method effectively harmonizes prompt adherence with real-world physical laws, achieving the highest average score of 73.16. This superiority confirms that enforcing temporal reversibility acts as a critical mechanism to prevent physical hallucinations in extremely long videos.

\begin{table}[htbp]
\centering
\caption{Ablation study of physical consistency on 60-second video generation.}
\label{tab:ablation_60s}
\begin{tabular}{lccc}
\toprule
\textbf{Method} & \textbf{PC} & \textbf{PACE} & \textbf{Average} \\
\midrule
Baseline & 59.45 & 75.71 & 67.58 \\
+ CCL & 70.40 & 74.94 & 72.67 \\
+ CGI & 62.77 & \textbf{76.67} & 69.72 \\
\textbf{Ours (Cycle-World)} & \textbf{70.90} & 75.42 & \textbf{73.16} \\
\bottomrule
\end{tabular}
\end{table}

\noindent\textbf{Synergistic Effects of Cycle Constraints.} The 60-second ablation study reveals the distinct and complementary roles of the proposed modules. The baseline model exhibits a noticeable imbalance over extended horizons, maintaining a relatively high PACE score of 75.71 but struggling with foundational physical commonsense, as evidenced by a low PC score of 59.45. Incorporating the training-time cycle-consistent learning immediately provides a robust parametric foundation, which drastically raises the PC score to 70.40, albeit with a slight reduction in PACE to 74.94. This indicates that the parametric prior effectively enforces rigid structural laws but may slightly constrain unbridled dynamic variance. Conversely, applying the cycle guidance inference alone primarily preserves the high PACE metric at 76.67 but only marginally improves the PC score to 62.77, demonstrating that runtime optimization without a learned physical prior is insufficient to fundamentally correct structural physical violations. The full model integrates both mechanisms to achieve a powerful dual-phase synergy, maximizing the PC score to 70.90 and recovering the PACE score to 75.42, which culminates in the highest overall average performance of 73.16.
\section{More Qualitative Comparisons}

\subsection{Interactive Long Video Generation}
\label{sec:interactive_generation}

\begin{figure}[t]
    \centering
    \includegraphics[width=\textwidth]{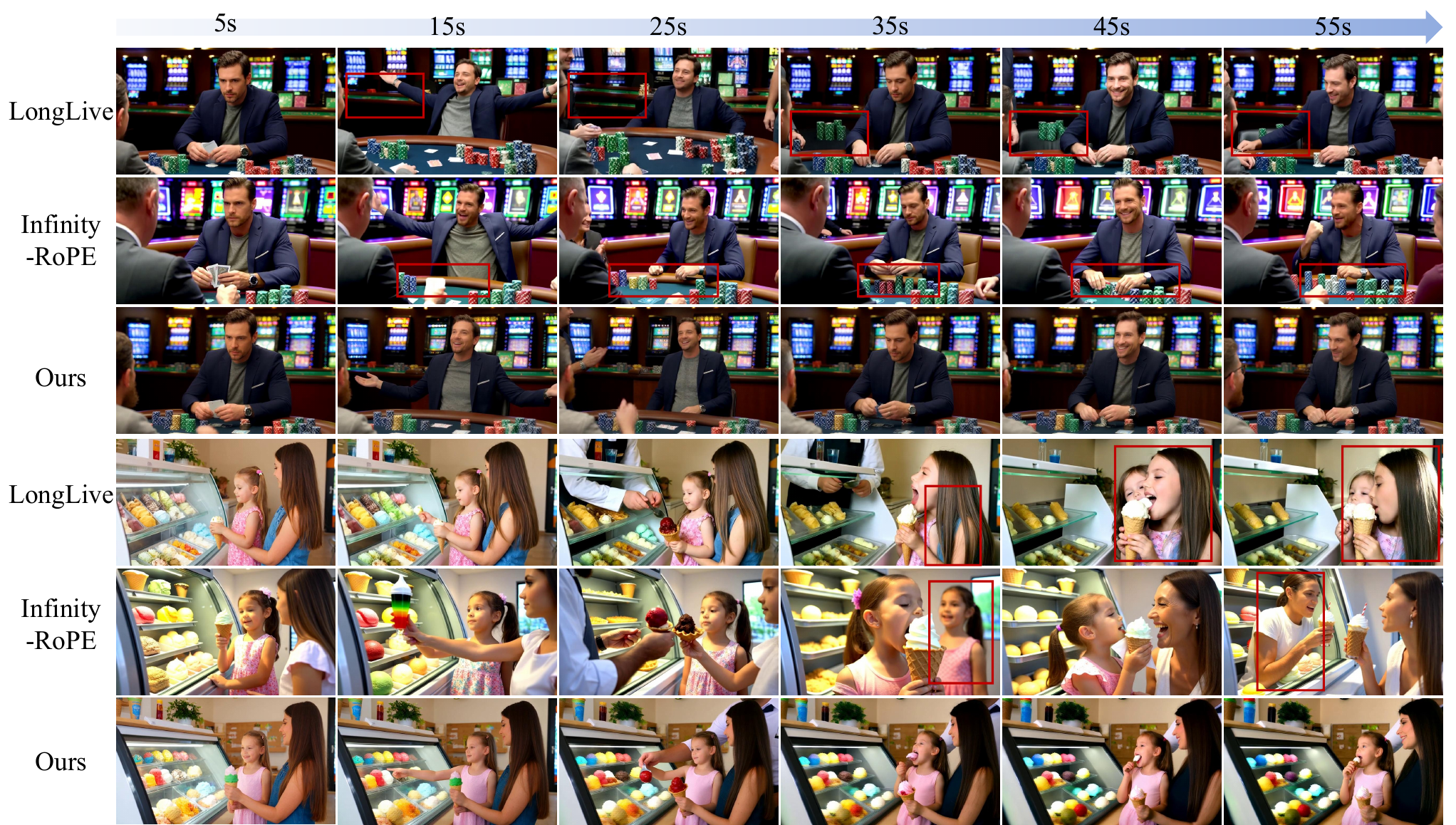}
\caption{Qualitative comparison of interactive long video generation over a 60-second horizon. The textual prompt is dynamically updated every 10 seconds. Cycle-World seamlessly interpolates new instructions while maintaining strict identity and background consistency, whereas baselines suffer from severe physical hallucinations and abrupt transitions.}
    \label{fig:interactive_generation}
\end{figure}

Beyond continuous prediction, a true video world model must support interactive, open-ended generation, allowing users to dynamically alter the future trajectory based on a shared historical context. However, dynamically changing text conditions during autoregressive generation often exacerbates structural collapse in standard models, as the sudden semantic shift disrupts the already fragile temporal continuity.

Leveraging the robust physical grounding provided by our Cycle-World framework, we evaluate its performance in interactive generation scenarios over a 60-second horizon, where the textual prompt is updated every 10 seconds. In this highly challenging setting, standard baselines such as Infinity-RoPE and LongLive struggle significantly. When confronted with semantic shifts or large-scale subject movements, these models frequently exhibit abrupt, unnatural scene transitions. Furthermore, they suffer from severe identity degradation and physical hallucinations, such as the sudden appearance or vanishing of background objects and characters. 

As illustrated in Fig. \ref{fig:interactive_generation}, Cycle-World exhibits exceptional adaptability and structural resilience. The runtime cycle critic explicitly penalizes physically impossible transitions, ensuring that the background remains stable and the subject's identity is strictly preserved despite the semantic branch. Unlike forward-only baselines that hallucinate entirely new entities when the prompt shifts, our model seamlessly interpolates the fluid dynamics required by the new instruction while maintaining strict temporal reversibility. This demonstrates that enforcing cycle consistency establishes a highly robust latent manifold, unlocking stable and interactive control for open-ended world simulation.

\subsection{Single-Prompt Long Video Generation}
\label{sec:single_prompt_generation}

\begin{figure}[t]
    \centering
    \includegraphics[width=\textwidth]{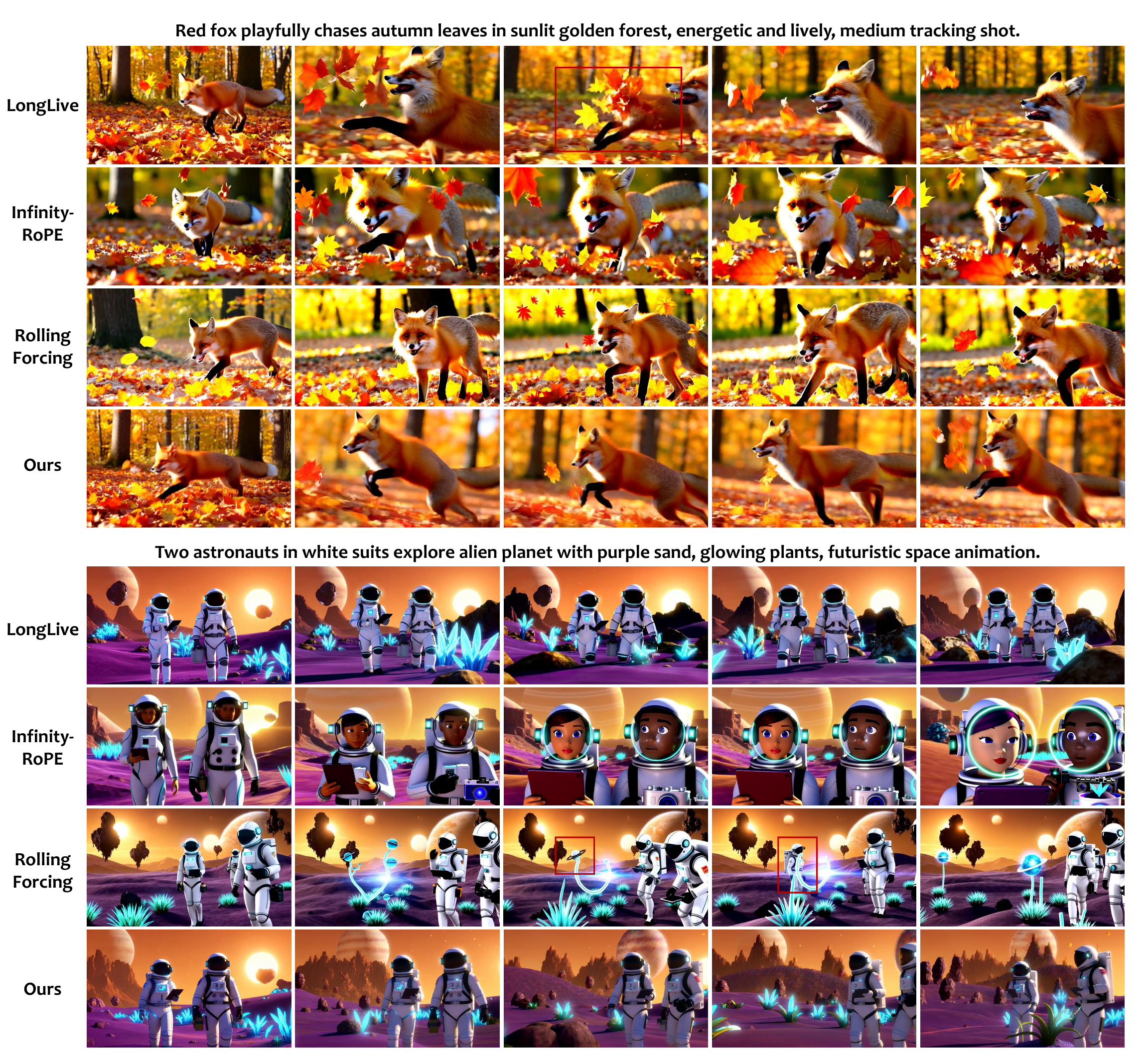}
\caption{Qualitative comparison of 60-second single-prompt video generation. Compared to LongLive, Infinity-RoPE, and Rolling Forcing, our method effectively prevents object interpenetration and hallucinatory artifacts, achieving an optimal balance between long-term visual consistency and rich motion dynamics.}
    \label{fig:qualitative_comparison_single_prompt}
\end{figure}

To further demonstrate the robustness of our framework, we compare Cycle-World against state-of-the-art long video autoregressive models, namely LongLive, Infinity-RoPE, and Rolling Forcing, under a 60-second single-prompt generation setting. Generating extended sequences without intermediate text guidance exposes the critical vulnerabilities of existing methods over time.

As shown in Fig. \ref{fig:qualitative_comparison_single_prompt}, LongLive suffers from physical hallucinations, including the spontaneous manifestation of non-existent subjects and unnatural object interpenetration (structural clipping). Infinity-RoPE, while attempting to maintain structural coherence, exhibits significantly degraded motion dynamics and fails to preserve subject consistency over extended periods. Similarly, Rolling Forcing exhibits a limited dynamic range and tends to hallucinate abrupt, out-of-context entities.

In stark contrast, our Cycle-World framework successfully achieves an optimal balance between visual consistency and rich motion dynamics. By rigorously enforcing temporal reversibility, our model prevents unconstrained generative drift. It ensures that the primary subject, background integrity, and natural physical interactions are preserved throughout the entire 60-second duration, without sacrificing the amplitude and realism of the generated motion.

\section{More Qualitative Results}
\label{sec:more_qualitative}

\begin{figure}[t]
    \centering
    \includegraphics[width=\textwidth]{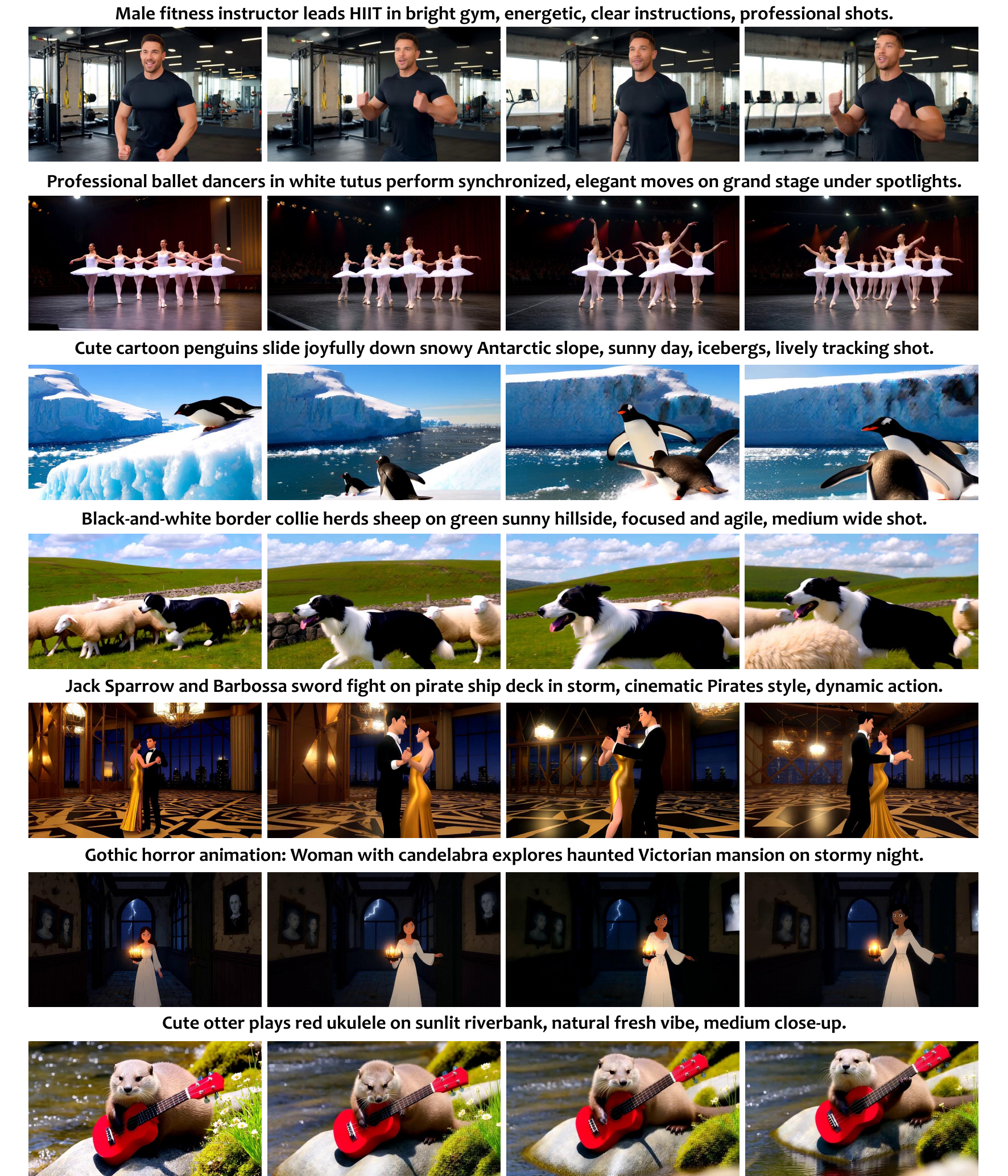}
\caption{Additional qualitative results of Cycle-World across diverse scenes and complex motions, demonstrating its high visual fidelity and robust spatiotemporal consistency.}
    \label{fig:more_qualitative_results}
\end{figure}

We provide additional qualitative results in Fig. \ref{fig:more_qualitative_results} to showcase the versatility and high visual fidelity of Cycle-World across diverse scenes and complex physical motions. These examples further corroborate the efficacy of our cycle-consistency framework. By inherently bounding the autoregressive prediction errors, Cycle-World is capable of producing highly stable, structurally sound, and aesthetically pleasing long-horizon video simulations across a wide variety of open-domain prompts.

\section{Hyperparameter Analysis}
\label{sec:hyperparameters}

We analyze the key hyperparameters of our framework to understand their impact on the trade-off between visual quality and physical consistency. Specifically, we examine the training cycle loss weight alongside the temporal distribution and step size of the inference cycle guidance.

\begin{table}[htbp]
\centering
\caption{Impact of the cycle loss weight during training (evaluated on 5-second generation).}
\label{tab:lambda_tuning}
\begin{tabular}{lccccc}
\toprule
Weight ($\lambda$) & Total & Quality & Semantic & PC & PACE \\
\midrule
0.2 & 82.89 & 84.80 & 75.27 & 68.66 & 75.80 \\
0.1 & 84.05 & 85.83 & 76.93 & 64.18 & 79.04 \\
0.05 & 84.47 & 86.56 & 76.13 & 56.72 & 76.53 \\
\bottomrule
\end{tabular}
\end{table}

Table~\ref{tab:lambda_tuning} shows the effect of the cycle loss weight during training on 5-second video generation. The results reveal a trade-off between aesthetic quality and structural adherence. A lower weight of 0.05 improves visual and semantic scores but causes a sharp decline in physical commonsense to 56.72. This indicates that the model struggles to stay on the physical manifold. A higher weight of 0.2 enforces strict cycle consistency and raises the PC score to 68.66. However, this regularization penalizes visual fidelity, reducing the total VBench score to 82.89. A weight of 0.1 balances these aspects, yielding the highest PACE score of 79.04 while preserving competitive video quality.

\begin{table}[htbp]
\centering
\caption{Effect of the temporal distribution of optimization steps during inference.}
\label{tab:opt_distribution}
\begin{tabular}{lccccc}
\toprule
Optimization Strategy & Total & Quality & Semantic & PC & PACE \\
\midrule
1 step across all 4 timesteps & 84.36 & 86.14 & 77.25 & 69.40 & 81.91 \\
2 steps on the first 2 timesteps & 84.22 & 85.89 & 77.52 & 69.40 & 81.77 \\
4 steps on the 1st timestep & 84.07 & 85.90 & 76.78 & 68.66 & 76.79 \\
\bottomrule
\end{tabular}
\end{table}

Table~\ref{tab:opt_distribution} details the effect of distributing a fixed budget of four optimization steps across the denoising timesteps during inference. Concentrating all four steps at the initial timestep results in the weakest performance, particularly lowering the PACE score to 76.79. Distributing the steps evenly by applying one optimization per timestep yields the best results across both visual and physical metrics. Continuous gradient guidance along the generation trajectory is more effective for maintaining temporal coherence than isolated early intervention.

\begin{table}[htbp]
\centering
\caption{Sensitivity analysis of the inference gradient guidance step size.}
\label{tab:step_size}
\begin{tabular}{lccccc}
\toprule
Step Size ($\eta$) & Total & Quality & Semantic & PC & PACE \\
\midrule
5 & 84.20 & 85.86 & 77.58 & 68.66 & 76.79 \\
10 & 84.36 & 86.14 & 77.25 & 69.40 & 81.91 \\
15 & 83.92 & 85.62 & 77.12 & 70.15 & 82.27 \\
\bottomrule
\end{tabular}
\end{table}

The gradient guidance step size controls the strength of the runtime corrector. As shown in Table~\ref{tab:step_size}, increasing the step size from 5 to 15 steadily improves the model's adherence to physical constraints, with PC and PACE reaching 70.15 and 82.27. An aggressive step size of 15 compromises general generation quality, leading to lower total and semantic VBench scores. A step size of 10 provides an optimal configuration that maintains physical alignment without sacrificing the visual and semantic integrity of the sequence.
\section{Plug-and-Play Extensibility of Cycle-Guided Inference}
\label{sec:plug_and_play}

A practical benefit of Cycle-Guided Inference (CGI) is its extensibility. Since the runtime corrector uses gradient-based latent refinement without altering the architecture of the forward generator, it integrates directly into existing autoregressive video models. This inference strategy applies to any forward-generation framework that operates within the same VAE latent space as the pre-trained reverse model.

To evaluate this adaptability, we apply the frozen reverse-prediction corrector to two standard forward-only baselines, CausVid and Self-Forcing. As Figure \ref{fig:plug-and-play} shows, adding CGI consistently improves both models in the 60-second long-horizon setting. 

Qualitative results highlight several improvements from this integration. CGI enhances the global spatiotemporal consistency of the generated sequences, preserving subject identities and background details over extended frames. It also corrects physical anomalies common in unconstrained autoregressive models. For example, in a dynamic chasing scene, unmodified baselines often generate a background that incorrectly moves forward relative to the running subjects. By enforcing temporal reversibility, the CGI module corrects this motion error, ensuring the background recedes naturally. 

Furthermore, while standard forward-only models suffer from generative drift over long contexts, applying CGI delays structural collapse. By removing accumulated artifacts at each autoregressive step, this strategy extends the effective generation length of the underlying baselines. This results in longer, structurally stable video sequences without requiring additional model retraining.

\begin{figure}[t]
    \centering
    \includegraphics[width=\textwidth]{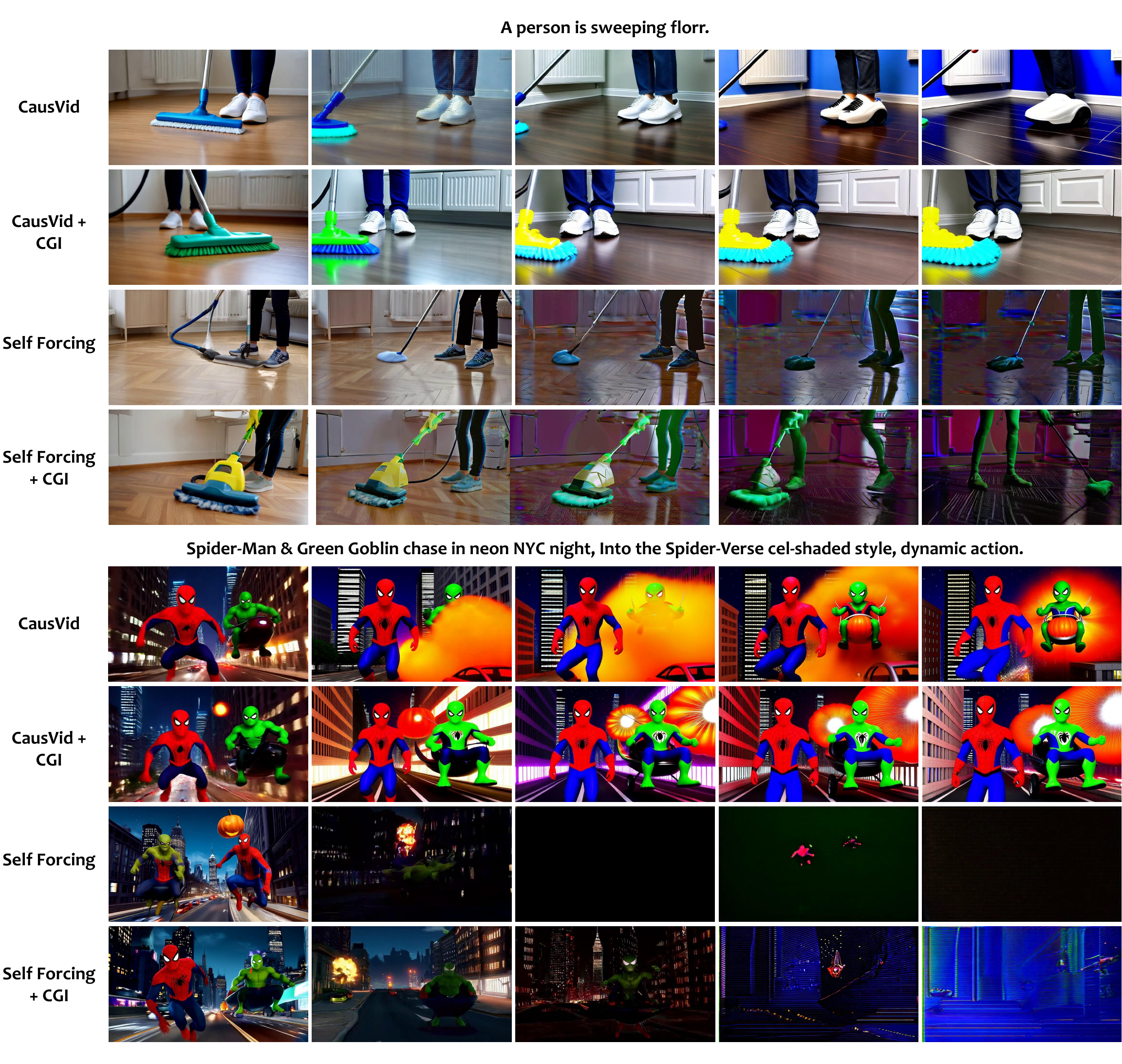}
\caption{Qualitative results of integrating Cycle-Guided Inference (CGI) into CausVid and Self-Forcing. The integration improves spatiotemporal consistency and corrects physical motion anomalies without model retraining.}
    \label{fig:plug-and-play}
\end{figure}

\section{Distinctions from Traditional Cycle Consistency}
\label{sec:related_cycle_consistency}

The concept of cycle consistency has been widely explored in computer vision, most notably in unpaired image-to-image translation frameworks such as CycleGAN\cite{zhu2017unpaired}. These traditional methods introduce a cycle-consistency objective to learn bijective mappings between two distinct spatial or stylistic domains. By ensuring that an image translated to a target domain can be accurately reconstructed back to its original domain, these models effectively bypass the requirement for strictly paired training data. The constraint primarily operates spatially, focusing on preserving texture, geometry, and structural content across different artistic or sensor modalities.

Our Cycle-World framework fundamentally diverges from these traditional applications in both its core objective and operational domain. Rather than mapping between different visual domains, our method enforces cycle consistency strictly along the temporal axis within a single continuous domain. The primary goal is not stylistic translation, but rather bounding the compounding generative drift inherent in long-horizon autoregressive synthesis. We conceptualize cycle consistency as a fundamental physical constraint, grounded in the observation that valid natural dynamics and causal events must be temporally reversible.

Functionally, this distinction translates to divergent architectural implementations. While traditional frameworks employ two symmetric cross-domain spatial generators, Cycle-World pairs a forward autoregressive generator with a temporal reverse-prediction model. This configuration constructs a step-wise cycle across sequential states. By actively minimizing the discrepancy between a historical state and its reverse-predicted reconstruction from a future state, our approach serves as an intrinsic physical regularizer. This temporal cycle explicitly penalizes structural hallucinations and non-physical motion artifacts, thereby maintaining long-term causal coherence rather than mere spatial fidelity.

\section{Applicability to General Temporal Modalities}
\label{sec:applicability}

While this study grounds the Cycle-World framework in long-horizon video synthesis, its theoretical formulation is intrinsically modality-agnostic. The core mechanism of bounding autoregressive drift via cycle consistency relies entirely on the topological properties of the latent manifold rather than the specific visual nature of the data. Consequently, this framework can be extended to other sequential domains, provided the underlying data distribution strictly adheres to the principle of local temporal reversibility.

Audio generation represents a highly compatible domain for this extension. Acoustic signals, whether speech, music, or environmental sounds, are continuous physical waveforms governed by mechanical laws and temporal causality. Current autoregressive audio models frequently experience compounding errors that manifest as rhythmic degradation, phase shifts, or the gradual loss of speaker identity over extended contexts. Because acoustic dynamics preserve short-term historical information within their local temporal window, training a reverse acoustic predictor is mathematically well-posed. Applying cycle-guided inference to audio latent spaces could actively correct these deviations, ensuring the generated sequence remains anchored to a natural acoustic manifold without requiring architectural changes to the base audio model.

Beyond perceptual modalities, the cycle-consistency paradigm holds significant potential for general temporal forecasting tasks in physically grounded environments. Predictive models in autonomous driving, robotic kinematics, and molecular dynamics simulate spatial-temporal states that are primarily governed by classical mechanics. In these Newtonian systems, state transitions are inherently time-reversible. A valid future state must contain sufficient deterministic information to deduce its immediate predecessor. By utilizing cycle guidance as a runtime physical regularizer, predictive models would be compelled to respect these mechanical constraints, thereby preventing the unconstrained divergence of simulated trajectories over long time horizons.

However, a rigorous theoretical boundary limits the universal application of this framework to all autoregressive tasks. The efficacy of Cycle-World is strictly contingent upon the reverse-predictability assumption. This mechanism cannot be generalized to highly entropic, discrete, or lossy sequential processes where the arrow of time introduces severe information collapse. In domains such as abstract text generation or financial market forecasting, state transitions often represent many-to-one mappings where multiple distinct past contexts can converge into an identical current state. Under such macroscopic irreversible conditions, the backward mapping becomes fundamentally ill-posed, and the reverse approximation error would violate the theoretical bounds required for our theorem to hold. Therefore, the applicability of Cycle-World is rigorously confined to continuous, physically grounded, or information-preserving latent manifolds where temporal inversion remains locally deterministic.
\end{document}